%% file: acl_latex.tex
\documentclass[11pt]{article}

\usepackage[final]{acl}

\usepackage{times}
\usepackage{latexsym}

\usepackage[T1]{fontenc}

\usepackage[utf8]{inputenc}

\usepackage{microtype}

\usepackage{inconsolata}

\usepackage{graphicx}

\usepackage{enumitem}
\usepackage{amsmath}
\usepackage{amssymb}
\usepackage{amsthm}

\theoremstyle{plain}
\newtheorem{theorem}{Theorem}
\newtheorem{lemma}{Lemma}
\newtheorem{proposition}{Proposition}

\theoremstyle{definition}

\theoremstyle{remark}
\newtheorem{remark}{Remark}

\usepackage[table]{xcolor}
\usepackage[utf8]{inputenc}
\usepackage{textcomp}
\usepackage{algorithm}
\usepackage{algorithmic}
\usepackage{float}
\usepackage{booktabs}
\usepackage[most]{tcolorbox}

\newtcolorbox{promptbox}[1]{
  enhanced,
  colback=gray!5,
  colframe=blue!60,
  boxrule=0.8pt,
  arc=4pt,
  left=6pt,
  right=6pt,
  top=6pt,
  bottom=6pt,
  fonttitle=\bfseries,
  title={#1},
  breakable
}

\title{SFAD: Speculative Factuality-Aware Decoding}

\author{
  \textbf{Guanqiao Chen\textsuperscript{1,2}},
  \textbf{Di Wang\textsuperscript{3,4}},
  \textbf{Lijie Hu\textsuperscript{1,*}}
\\
  \textsuperscript{1}MBZUAI
\\
  \textsuperscript{2}University of Science and Technology of China
\\
  \textsuperscript{3}Provable Responsible AI and Data Analytics (PRADA) Lab
\\
  \textsuperscript{4}King Abdullah University of Science and Technology
\\
  \textsuperscript{*}Corresponding author.
}

\begin{document}
\maketitle
\begin{abstract}
  As one of the most critical challenges in large language models, contextual faithfulness directly determines their reliability in knowledge-intensive applications. This task is particularly challenging as it requires balancing factual consistency with generation efficiency. Contrastive decoding methods require dual forward passes (with and without context) to compare model outputs, doubling inference computational overhead, while post-training alignment demands extensive reinforcement learning with substantial computational overhead. To address this challenge, we present \textbf{SFAD}, a speculative decoding framework that enhances contextual faithfulness without inference degradation. We first construct \textbf{ConFide}, a preference dataset with fine-grained atomic perturbations, to train a context-faithful draft model via Direct Preference Optimization. During inference, Epistemic Friction detects potential hallucinations by quantifying distributional tension weighted by specialist certainty. When friction exceeds the threshold, Asymmetric Logit Steering refines the target distribution through residual-based logit injection; otherwise, standard speculation proceeds. Extensive experiments demonstrate that SFAD substantially improves faithfulness while achieving $2.48\times$ speedup, offering a practical solution for efficient LLMs.
\end{abstract}

\input{Section/1_Intro}

\input{Section/2_Related}
\input{Section/3_DataConstruction}

\input{Section/4_SFAD}
\input{Section/5_Experiment}

\input{Section/6_Analysis}
\input{Section/7_Conclusion}

\input{Section/8_impact_statement}
\input{Section/Ack}

\bibliography{custom}

\appendix
\input{Section/appendix_two}

\end{document}

%% file: Section/1_Intro.tex
\section{Introduction}
\begin{figure}[t]
\centering
\includegraphics[width=\columnwidth]{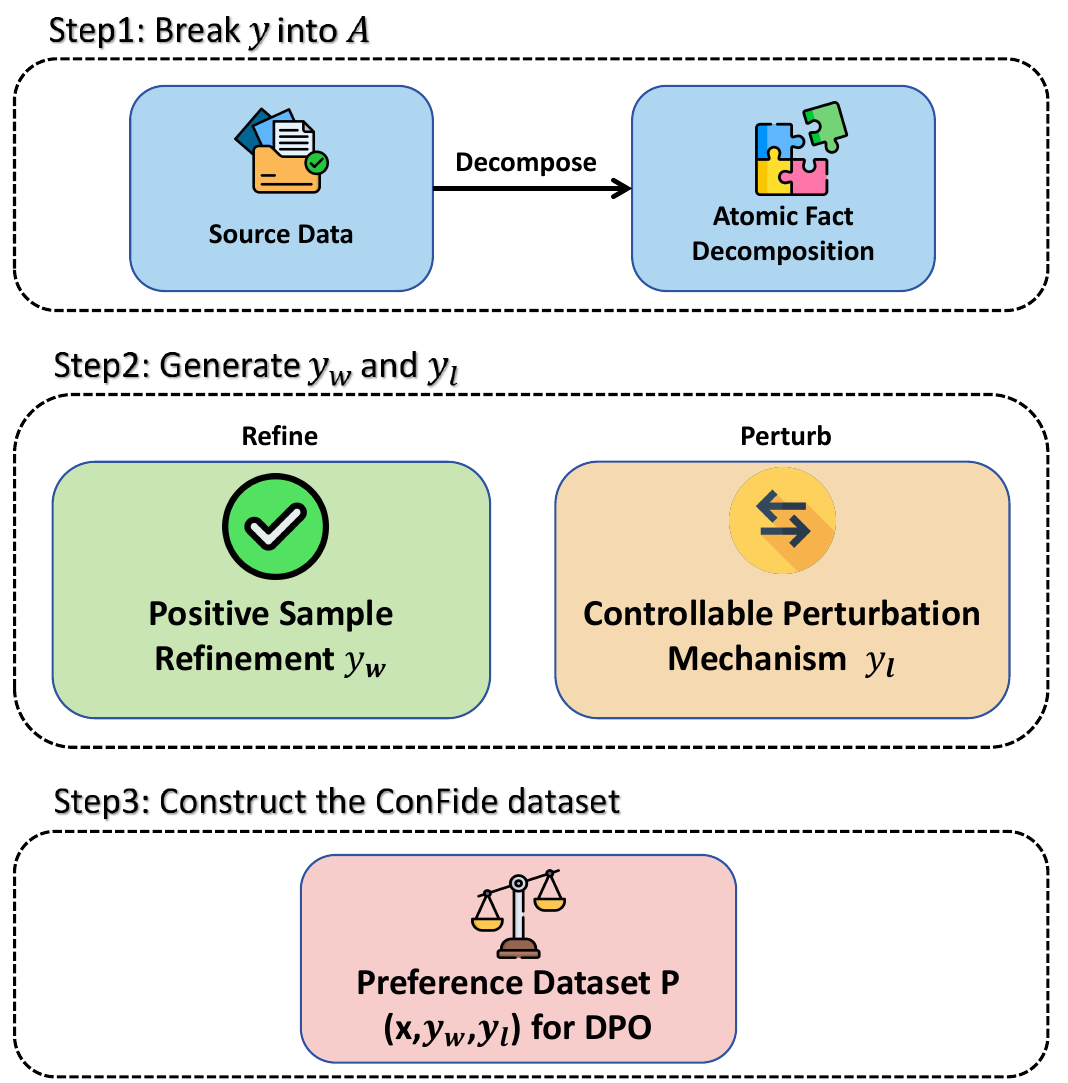}
\caption{The ConFide pipeline for factuality-aware preference data construction.}
\label{fig:confide-framework}
\end{figure}

Large Language Models (LLMs)~\cite{guo2025deepseek,achiam2023gpt} have achieved remarkable success by leveraging vast parametric knowledge acquired during pretraining~\cite{petroni-etal-2019-language, roberts-etal-2020-much}. However, this internal knowledge is inherently static and constrained by the training distribution, rendering it prone to becoming outdated or incomplete~\cite{cheng2024multihop,Li_2026_CVPR,pmlr-v267-zhang25aq,cheng2024leveraging}. To address this limitation, retrieval-augmented generation (RAG)~\cite{qin2025tool, pmlr-v119-guu20a,cheng-etal-2025-compke} and external tool integration have become increasingly prevalent, intensifying the demand for models to prioritize provided context over parametric priors. Yet LLMs frequently exhibit hallucinations~\cite{huang-etal-2025-improving, ji2023survey,cheng-etal-2025-codemenv} due to \textit{knowledge conflicts}, where models favor internal knowledge over task-specific external contexts~\cite{xie2024adaptive, chen-etal-2022-rich,zhang2025modalities}, undermining their contextual faithfulness in knowledge-intensive applications.

\begin{figure*}[t]
    \centering
    \includegraphics[width=\textwidth]{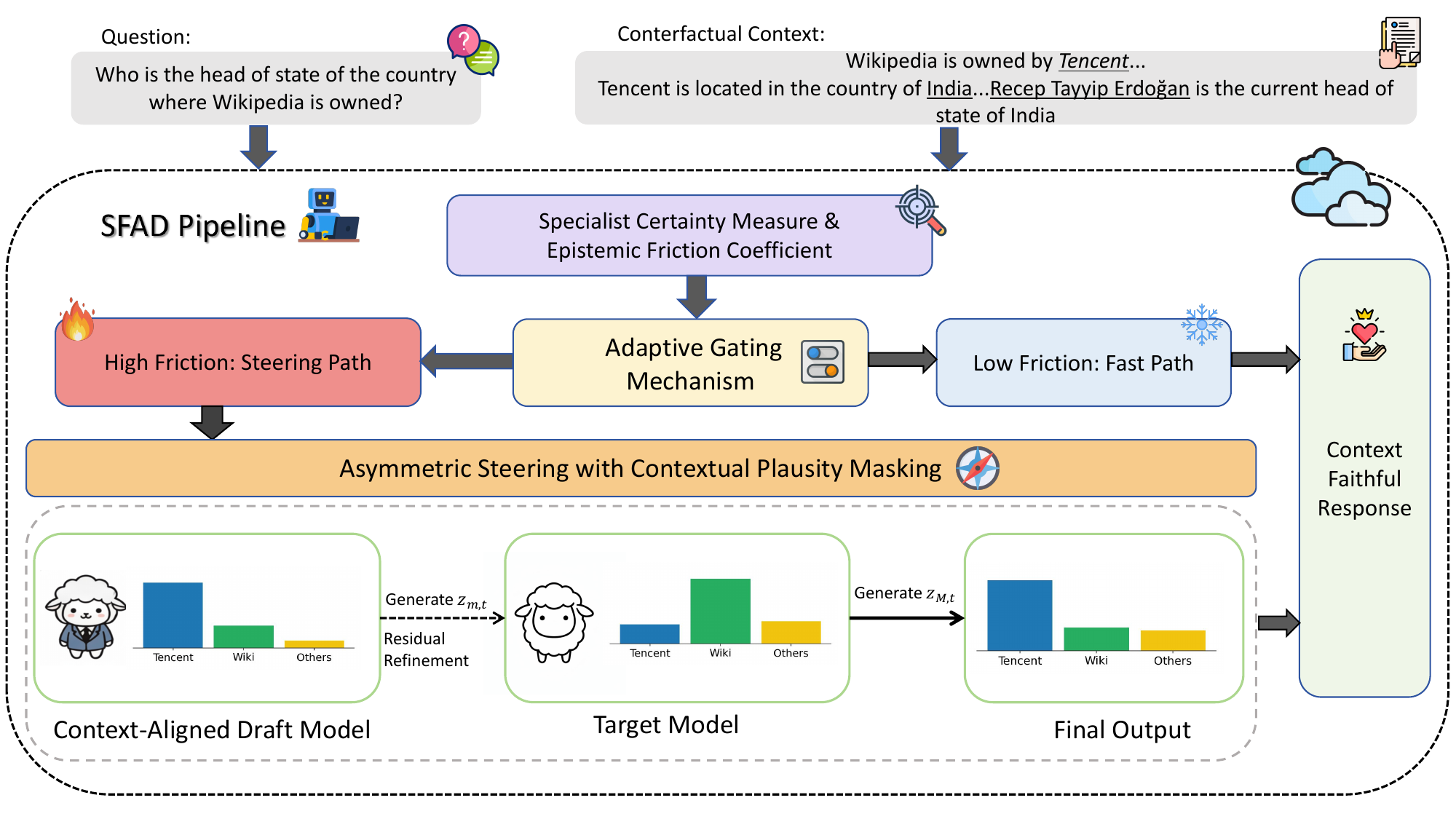}
    \caption{SFAD framework.}
    \label{fig:pipeline}
\end{figure*}

Enhancing contextual faithfulness while maintaining inference efficiency presents significant challenges. Decoding-based methods~\cite{shi-etal-2024-trusting, xu2023context, khandelwal-etal-2025-cocoa,yang2025tracing} contrast logits from contextualized inputs against uncontextualized baselines to amplify evidence-aligned signals. However, this requires dual forward, effectively doubling computational cost and halving generation speed. Post-training methods~\cite{yan-etal-2025-rpo, wang2025infinite,yang2025investigating,zhou-etal-2026-flattery} employ reinforcement learning to align models with contextual evidence, but typically require extensive compute and large-scale preference data. These limitations motivate the need for approaches that enhance faithfulness without compromising efficiency or requiring substantial resources.

To resolve these problems, we propose \textbf{SFAD}, a speculative decoding framework that unifies contextual faithfulness enhancement with speculative acceleration. Our key insight is to leverage a \textit{context-faithful draft model} as a factuality sentinel: by training a smaller model to prioritize contextual evidence, we enable it to detect and correct hallucinations in the larger target model during speculative decoding. Specifically, we first construct \textbf{ConFide}, a preference dataset that improves contextual faithfulness by generating diverse hard-negative samples through atomic decomposition and controllable perturbations. Training the draft model on ConFide and ConFiQA~\cite{bi-etal-2025-context} via Direct Preference Optimization instills strong contextual loyalty. During inference, SFAD introduces \textit{Epistemic Friction}, a metric that quantifies distributional tension weighted by specialist certainty to dynamically detect knowledge conflicts. When friction signals potential hallucinations, \textit{Asymmetric Logit Steering} selectively refines the target distribution through residual-based injection; otherwise, standard speculative decoding proceeds unmodified. This adaptive mechanism maintains the $2\sim 3\times$ speedup of speculative decoding while substantially improving contextual faithfulness.

Our contributions are threefold: 
\textbf{(1) Dataset: } we introduce ConFide, a fine-grained preference dataset designed for training context-faithful draft models through atomic-level perturbations; 
\textbf{(2) Framework: } we propose \textbf{SFAD}, the first speculative decoding framework specifically designed for hallucination mitigation, which features \textit{Epistemic Friction} for conflict detection and \textit{Asymmetric Logit Steering} for selective correction; and 
\textbf{(3) Evaluation: } extensive experiments demonstrate that \textsc{SFAD} achieves substantial faithfulness improvements across diverse benchmarks while delivering a $2.48\times$ speedup, approaching the performance of models $5\times$ larger with minimal computational overhead.

%% file: Section/2_Related.tex
\section{Related Work}

\subsection{Hallucinations in LLMs}
Hallucinations\cite{huang2025survey} occur when LLM outputs appear plausible yet deviate from factual or contextual knowledge~\cite{kaddour2023challenges, ji2023survey}. These are typically bifurcated into \textit{factuality hallucinations}, which contradict real-world facts based on internal parametric knowledge~\cite{min-etal-2023-factscore, wei2024measuring}, and \textit{faithfulness hallucinations}, characterized by inconsistency with the provided input or grounding documents~\cite{wang2026truth,wan-etal-2023-faithfulness,hu2025monica,guo2025benchmarking}. Mitigation strategies span the model lifecycle: training-phase efforts focus on data curation and grounding~\cite{pan2024unifying}, while inference-stage methods employ confidence estimation~\cite{huang2025look}, knowledge retrieval~\cite{feng2024retrieval}, and editing~\cite{ali-etal-2025-mqa,wang2025lifecycle}. Despite these advancements, LLMs remain prone to inconsistent outputs in context-rich tasks such as RAG and summarization~\cite{xu-etal-2024-knowledge-conflicts, li-yu-2025-summary}. \textbf{Our work leverages the Speculative Decoding framework to ensure context-consistent generation.}

\subsection{Speculative Decoding}
Speculative decoding accelerates LLM inference by verifying draft tokens generated by a smaller model against the target LLM~\cite{pmlr-v202-leviathan23a, chen2023accelerating}. Drafting strategies have evolved from independent models~\cite{xia-etal-2023-speculative} to architectural extensions such as recurrent feature utilization~\cite{pmlr-v235-li24bt} and diffusion-based generation~\cite{li-etal-2026-diffuspec}. Efficiency has been further optimized through tree-based parallel verification~\cite{miao2024specinfer}, sparse Mixture-of-Experts (MoE) acceleration~\cite{huang2025moesd}, and specialized reinforcement learning system enhancements~\cite{chen2026respec}. Recent research has also tailored SD for diverse and complex scenarios, including long-context retrieval-augmentation~\cite{sun2024triforce, sadhukhan2025magicdec, yang-etal-2026-longspec, pmlr-v267-chen25s}, multi-sample inference tasks~\cite{li-etal-2025-speculative}, and memory-efficient inference via quantized caching~\cite{pmlr-v267-tiwari25b}. Beyond performance, safety-awareness has been integrated into the speculative framework to ensure alignment~\cite{wang-etal-2025-speculative}. Unlike these performance-driven or safety-centric works, \textbf{SFAD is the first speculative framework to target hallucination mitigation.}

%% file: Section/3_DataConstruction.tex
\section{Aligning Draft Models for Context-Faithfulness via ConFide}
\label{sec:confide_alignment}

To enhance the draft model's generalization across diverse hallucination types, we construct \textbf{ConFide}, a fine-grained preference dataset that systematically diversifies error patterns in negative samples and stylistic variations in positive samples. As illustrated in Figure~\ref{fig:confide-framework}, our pipeline leverages atomic fact decomposition and controllable perturbations to produce high-quality contrastive pairs.

\subsection{Data Construction Pipeline}
\noindent\textbf{Source Data and Problem Formulation.}
We build upon the LLM-AggreFact~\cite{tang-etal-2024-minicheck} and CG2C~\cite{lei-etal-2025-factcg} datasets, which provide fact-intensive samples covering diverse hallucination patterns suitable for atomic decomposition. Our initial dataset is defined as $\mathcal{D} = \{(x^{(i)}, y_{\text{ref}}^{(i)}, l^{(i)})\}_{i=1}^N$, where $x$ is the source context, $y_{\text{ref}}$ is the candidate response, and $l \in \{0, 1\}$ is the binary faithfulness label. Our goal is to construct a preference dataset $\mathcal{P} = \{(x, y_w, y_l)\}$ for DPO training, where $y_w$ is faithful to context $x$ and $y_l$ contains specific hallucinations.

\noindent\textbf{Atomic Fact Decomposition.}
We decompose each response into verifiable atomic facts via a decomposition function $f_{\text{dec}}$:
\begin{equation}
    \mathcal{A} = f_{\text{dec}}(y) = \{a_1, a_2, \dots, a_m\},
\end{equation}
where each atomic fact $a_j$ represents a minimal verifiable claim.

\noindent\textbf{Negative Sample Generation.}
We apply a Controllable Perturbation Mechanism to transform atomic facts into corrupted versions. Given $a_j = (s_j, r_j, o_j)$ as a subject-relation-object triple, we define three perturbation operators:
\begin{equation}
\phi(a_j) = \begin{cases}
(s_j, r_j, o'_j), & \tau_{\text{ent}} \\
(s_j, r_j, o_j + \epsilon), & \tau_{\text{num}} \\
(s_j, \neg r_j, o_j), & \tau_{\text{rel}}
\end{cases}
\end{equation}
where $\tau_{\text{ent}}$, $\tau_{\text{num}}$, and $\tau_{\text{rel}}$ denote entity swap, numerical distortion, and relation inversion, respectively. Here, $o'_j$ is a confusing entity drawn from context $x$, $\epsilon$ is calibrated noise preserving numerical plausibility, and $\neg r_j$ denotes logical negation of relation $r_j$. The perturbed facts are reconstructed into a fluent response $y_l = f_{\text{rec}}(\mathcal{A}')$, yielding hallucinated yet linguistically natural negative samples.

\noindent\textbf{Positive Sample Refinement.}
Winning samples $y_w$ are constructed based on the faithfulness label $l$. If $l=1$, we apply paraphrasing $\pi_{\text{para}}$ to generate semantically equivalent but syntactically varied sentences. If $l=0$, a teacher model (GPT-4o~\cite{hurst2024gpt}) corrects $y_{\text{ref}}$ against context $x$ to produce a strictly faithful $y_w$. This ensures winning samples reflect genuine factual alignment rather than surface-level patterns.

\noindent\textbf{Preference Dataset Construction.}
Pairing the perturbed $y_l$ with the refined $y_w$ yields the final preference dataset $\mathcal{P}$, where negative samples exhibit fine-grained, fluent hallucinations and positive samples maintain strict contextual fidelity.

\subsection{Draft Model Optimization via DPO}
We optimize the draft model $\pi_\theta$ via the DPO objective~\citep{rafailov2023direct}, which directly maximizes the preference margin between faithful responses $y_w$ and hallucinated responses $y_l$ without requiring an explicit reward model. The gradient signal encourages $\pi_\theta$ to concentrate probability mass on tokens strictly verifiable against the context $x$, controlled by hyperparameter $\beta$ relative to the frozen reference model $\pi_{\text{ref}}$. 

%% file: Section/4_SFAD.tex
\section{SFAD: Speculative Factuality-Aware Decoding}
\label{sec:SFAD}
We formulate \textbf{Speculative Factuality-Aware Decoding (SFAD)} as a dynamic inference framework designed to decouple token verification from distribution correction. Figure~\ref{fig:pipeline} illustrates the overall pipeline. Let $M$ and $m$ denote the target (generalist) and draft (specialist) models with parameters $\theta_M$ and $\theta_m$, respectively. At each time step $t$, given the prefix context $x_{<t}$, the models produce logit vectors $\mathbf{z}_{M,t}, \mathbf{z}_{m,t} \in \mathbb{R}^{|\mathcal{V}|}$. Our framework is governed by a five-step formalism that adaptively steers the generation process when the specialist detects potential factual inconsistencies. Theoretical analysis and proofs are provided in Appendix~\ref{app:theory}.

\subsection{Specialist Certainty Measure}
To prevent noise injection from an uncertain draft model, we first quantify its internal confidence. Unlike standard entropy measures, we require a normalized metric that penalizes high-entropy (uncertain) distributions. We define the \textbf{Specialist Certainty} $\kappa_t$ as:
\begin{gather}
    \kappa_t = \left( 1 - \frac{\mathbb{H}(\mathbb{P}_m(\cdot|x_{<t}))}{\log |\mathcal{V}|} \right)^\gamma, \\
    \text{where } \mathbb{P}_m = \text{Softmax}(\mathbf{z}_{m,t}). \nonumber
\end{gather}
where $\mathbb{H}(\cdot)$ denotes the Shannon entropy, $|\mathcal{V}|$ is the vocabulary size, and $\gamma \geq 1$ is a sharpening coefficient. Here, $\kappa_t \to 1$ implies the specialist is highly confident in its prediction, serving as a necessary precondition for intervention.

\subsection{Epistemic Friction Coefficient}
To distinguish factual conflicts from benign linguistic diversity, we propose the \textbf{Epistemic Friction} $\mathcal{F}_t$, which captures the distributional tension between the generalist and the specialist. It is formulated as the Jensen-Shannon (JS) divergence weighted by the specialist's certainty:
\begin{equation}
    \mathcal{F}_t = \mathcal{D}_{\text{JS}} \left( \mathbb{P}_M(\cdot|x_{<t}) \, \Vert \, \mathbb{P}_m(\cdot|x_{<t}) \right) \cdot \kappa_t
\end{equation}
$\mathcal{F}_t$ acts as a detector for \textit{confident hallucinations}. High friction occurs if and only if the models disagree significantly ($\mathcal{D}_{\text{JS}}$ is high) \textit{and} the specialist is factually convinced ($\kappa_t$ is high).

\subsection{Adaptive Gating Mechanism}
To ensure computational efficiency and avoid over-correction, SFAD employs a soft gating scalar $\lambda_t \in [0, 1]$ to govern the intensity of the steering. It is derived via a shifted sigmoid activation:
\begin{equation}
    \lambda_t = \sigma\left(\beta \cdot (\mathcal{F}_t - \tau)\right) = \frac{1}{1 + \exp\left(-\beta(\mathcal{F}_t - \tau)\right)}
\end{equation}
where $\tau$ is the friction threshold and $\beta$ controls the transition sharpness. This creates a switching regime: when $\mathcal{F}_t \ll \tau$, $\lambda_t \to 0$ (Standard Speculation); when $\mathcal{F}_t \gg \tau$, $\lambda_t \to 1$ (Steering Mode).

\subsection{Asymmetric Steering with Contextual Plausibility Masking}
When intervention is triggered, we modify the target logits. To ensure that the specialist only intervenes when its predictions are linguistically plausible within the current sequence, we introduce a \textbf{Contextual Plausibility Mask (CPM)}. We define the plausible set $\mathcal{V}_{CPC}$ as:
\begin{equation}
    \mathcal{V}_{CPC} = \left\{ v \in \mathcal{V} : \mathbb{P}_m(v|x_{<t}) \geq \eta p_t^{\max} \right\}
\end{equation}
where $p_t^{\max}$ is the maximum probability assigned by $\mathbb{P}_m(\cdot|x_{<t})$, and $\eta \in (0, 1)$ is a plausibility threshold. 

Inspired by previous work~\cite{bachmann2025judge}, which demonstrated that alignment-based verification in speculative decoding rejects many high-quality tokens due to distribution mismatches, we extend this insight beyond token-level acceptance. Instead of solely adapting verification schemes, we intervene at the logit level to inject factuality-aware corrections, ensuring both efficiency and reduced hallucinations.The corrected logit vector $\mathbf{z}^*_t$ is computed as:
\begin{equation}
\label{eq:steering}
    \mathbf{z}^*_t = \mathbf{z}_{M,t} + \lambda_t \cdot \text{ReLU}(\mathbf{z}_{m,t} - \mathbf{z}_{M,t}) \cdot \mathbb{I}(x \in \mathcal{V}_{CPC})
\end{equation}
The $\text{ReLU}(\cdot)$ operator enforces \textit{unidirectional knowledge injection}, allowing the specialist to boost correct entities without penalizing the generalist's linguistic fluency. The $\mathbb{I}(x \in \mathcal{V}_{CPC})$ term acts as a safety guard, ensuring the specialist only steers the target toward tokens that maintain "compositional plausibility."

\subsection{Hybrid Decoding Policy}
Finally, the next token $x_t$ is determined by a hybrid policy that switches between the corrected distribution and standard speculative verification:
\begin{equation}
\label{eq:hybrid}
    x_t \sim 
    \begin{cases} 
    \text{Softmax}(\mathbf{z}^*_t) & \text{if } \mathcal{F}_t \geq \tau \quad \text{(Steering Path)} \\
    \text{Verify}(\tilde{x}_t, \mathbb{P}_M) & \text{if } \mathcal{F}_t < \tau \quad \text{(Fast Path)}
    \end{cases}
\end{equation}
In the \textbf{Steering Path}, we sample from the steered logits, effectively overriding the draft. In the \textbf{Fast Path}, we perform standard rejection sampling using the draft token $\tilde{x}_t$ and the original target distribution $\mathbb{P}_M$, maintaining the speed guarantees of speculative decoding.

%% file: Section/5_Experiment.tex
\section{Experiments}
\subsection{Experimental Setup}
\label{sec:setup}

\noindent\textbf{Models and Configurations.} We implement the SFAD framework using the Qwen3 family~\cite{yang2025qwen3}. Specifically, Qwen3-1.7B is utilized as the draft model, which is fine-tuned via Direct Preference Optimization  to serve as the expert model.The target model is Qwen3-14B.

\noindent\textbf{Data Construction for DPO.} 
The DPO training set for the draft model consists of approximately 36K curated samples, combining two complementary sources. First, we directly adopt 18K instances from \textbf{ConFiQA}~\cite{bi-etal-2025-context}, a benchmark focusing on multi-hop knowledge conflicts. Second, we construct our \textbf{ConFide} dataset using the atomic perturbation pipeline described in Section~\ref{sec:confide_alignment}. Specifically, ConFide is synthesized from two source datasets: 12K samples from LLM-AggreFact~\cite{tang-etal-2024-minicheck}, and 6K samples from the CG2C dataset~\cite{lei-etal-2025-factcg}.

\noindent\textbf{Baselines.} 
We compare SFAD against two categories of baselines to demonstrate its superiority in both contextual faithfulness and inference efficiency. The first category consists of decoding-level methods applied to our target model, Qwen3-14B: Greedy Decoding, Context-Aware Decoding (CAD)~\cite{xu2023context}, COIECD~\cite{yuan-etal-2024-discerning}, and AdaCAD~\cite{wang-etal-2025-adacad}. The second category includes a frontier model, Llama-3.1-70B-Instruct~\cite{llama3.1}, which has $5\times$ the parameters of Qwen3-14B, serving as a high-performance reference point.

\noindent\textbf{Evaluation Tasks and Datasets.} 
We evaluate across five task categories: 
(1) \textit{Factual Retrieval} on HotpotQA~\cite{yang-etal-2018-hotpotqa}, PopQA~\cite{mallen-etal-2023-trust}, and TriviaQA~\cite{joshi-etal-2017-triviaqa}; 
(2) \textit{Abstractive Faithfulness} via summarization on TofuEval~\cite{tang-etal-2024-tofueval} and XSum~\cite{narayan-etal-2018-dont}; 
(3) \textit{Extended Generation} on CLAPNQ~\cite{rosenthal-etal-2025-clapnq}, ExpertQA~\cite{malaviya-etal-2024-expertqa}, and HAGRID~\cite{kamalloo2023hagrid}; 
(4) \textit{Knowledge Conflicts} using 200 held-out instances from LLM-AggreFact; 
and (5) \textit{General Capabilities} on GSM8K~\cite{cobbe2021training} and Just-Eval~\cite{lin2024unlocking}.

\noindent\textbf{Quality and Efficiency Metrics.} 
For generative quality, we use standard metrics tailored to each task: Exact Match (EM) for retrieval tasks; AlignScore, BERT-P, and ROUGE-L for summarization; and FaithScore (computed via MiniCheck~\cite{tang-etal-2024-minicheck}) for long-form QA. For knowledge conflict analysis, we adopt ConFiQA~\cite{bi-etal-2025-context} metrics: Context-faithful Frequency ($P_c$), Original Factual Frequency ($P_o$), and Memory Reliance ($M_R$). To quantify the inference acceleration of SFAD, we define the Average Token Generation Acceleration (ATGA), measured in an end-to-end, context-aware setting:
\begin{equation}
\label{eq:atga}
    \text{ATGA} = \frac{\text{Avg. token generation time w/o SFAD}}{\text{Avg. token generation time w/ SFAD}}
\end{equation}

\subsection{Main Results}

We present comprehensive evaluations across three categories of context-intensive tasks: Foundation QA, Summarization, and Long-Form QA.

SFAD demonstrates consistent metric improvements across all evaluated benchmarks, optimizing both task accuracy and inference efficiency. While standard decoding-level interventions typically incur latency penalties due to redundant forward passes, SFAD integrates a speculative decoding framework that streamlines the generation process. This efficiency is driven by our design’s focus on enhancing contextual consistency, which allows the model to leverage evidentiary alignment to accelerate token production. In Foundation QA, the method mitigates the internal knowledge limitations of the base model, a capability that extends to long-form generation where SFAD addresses the inherent trade-off between lexical fluency and factual grounding. By prioritizing context-aligned tokens, SFAD substantially narrows the performance gap between the 14B base model and the $5\times$ larger Llama-3.1-70B frontier model. These results underscore the capacity of SFAD to maintain sustained evidentiary grounding and long-range coherence with reduced computational costs, avoiding the need for task-specific fine-tuning or increased parameter overhead.

\begin{table}[t]
\centering
\small
\caption{Performance comparison on Foundation QA. \textbf{Rel. Latency} denotes the inference time relative to the Qwen3-14B vanilla baseline.}
\label{tab:foundation_qa}
\resizebox{\columnwidth}{!}{%
\begin{tabular}{lcccc}
\toprule
\textbf{Method} & \textbf{TriviaQA} & \textbf{HotpotQA} & \textbf{PopQA} & \textbf{Rel. Latency} \\
\midrule
\multicolumn{5}{l}{\textit{Vanilla Baseline}} \\
Qwen3-14B & 53.87 & 41.77 & 78.21 & 1.00$\times$ \\
\midrule
\multicolumn{5}{l}{\textit{Decoding-level Baselines (Qwen3-14B)}} \\
CAD & 41.43 & 39.51 & 71.29 & 2.00$\times$ \\
AdaCAD & 82.11 & 45.63 & 77.39 & 2.15$\times$ \\
COIECD & \underline{83.07} & \underline{45.63} & 76.29 & 2.40$\times$ \\
\midrule
\multicolumn{5}{l}{\textit{Frontier Model}} \\
Llama-3.1-70B & \textbf{90.20} & \textbf{56.11} & \underline{86.11} & 4.85$\times$ \\
\midrule
\rowcolor[HTML]{D9EAD3}
\textbf{SFAD (Ours)} & \underline{85.12} & \underline{52.19} & \textbf{86.39} & \textbf{0.82$\times$} \\
\bottomrule
\end{tabular}%
}
\end{table}

\begin{table}[t]
\centering
\small
\caption{Summarization performance on XSum and TofuEval. SFAD achieves superior factuality while reducing inference overhead.}
\label{tab:summarization}
\resizebox{\columnwidth}{!}{%
\begin{tabular}{lccccc}
\toprule
& \multicolumn{3}{c}{\textbf{XSum}} & \textbf{TofuEval} & \\
\cmidrule(lr){2-4} \cmidrule(lr){5-5}
\textbf{Method} & \textbf{R-L} & \textbf{BERT-P} & \textbf{AlignScore} & \textbf{AlignScore} & \textbf{Rel. Latency} \\
\midrule
\multicolumn{6}{l}{\textit{Vanilla Baseline}} \\
Qwen3-14B & 13.67 & 91.67 & 72.86 & 59.84 & 1.00$\times$ \\
\midrule
\multicolumn{6}{l}{\textit{Decoding-level Baselines (Qwen3-14B)}} \\
CAD & 14.59 & 93.65 & 84.34 & 83.23 & 2.00$\times$ \\
AdaCAD & 14.91 & \textbf{94.29} & \underline{85.81} & \underline{85.07} & 2.20$\times$ \\
COIECD & 13.65 & 91.04 & 73.81 & 60.86 & 2.45$\times$ \\
\midrule
\multicolumn{6}{l}{\textit{Frontier Model}} \\
Llama-3.1-70B & \underline{16.35} & \underline{94.12} & \textbf{87.48} & 87.31 & 5.10$\times$ \\
\midrule
\rowcolor[HTML]{D9EAD3}
\textbf{SFAD (Ours)} & \textbf{16.32} & 93.97 & 87.37 & \textbf{87.53} & \textbf{0.85$\times$} \\
\bottomrule
\end{tabular}%
}
\end{table}

\begin{table}[t]
\centering
\small
\caption{Long-Form QA results. SFAD significantly enhances faithfulness with higher efficiency than the vanilla model.}
\label{tab:longform_qa}
\resizebox{\columnwidth}{!}{%
\begin{tabular}{lcccccc|c}
\toprule
& \multicolumn{2}{c}{\textbf{CLAPNQ}} & \multicolumn{2}{c}{\textbf{ExpertQA}} & \multicolumn{2}{c}{\textbf{HAGRID}} & \\
\cmidrule(lr){2-3} \cmidrule(lr){4-5} \cmidrule(lr){6-7}
\textbf{Method} & \textbf{R-L} & \textbf{Faith} & \textbf{R-L} & \textbf{Faith} & \textbf{R-L} & \textbf{Faith} & \textbf{Rel. Latency} \\
\midrule
\multicolumn{8}{l}{\textit{Vanilla Baseline}} \\
Qwen3-14B & 17.12 & 59.73 & 31.56 & 51.29 & 16.96 & 57.63 & 1.00$\times$ \\
\midrule
\multicolumn{8}{l}{\textit{Decoding-level Baselines (Qwen3-14B)}} \\
CAD & 18.23 & 60.24 & 33.58 & 53.47 & 17.89 & 58.20 & 2.00$\times$ \\
AdaCAD & 18.43 & 62.37 & 32.87 & 54.76 & 17.12 & 59.90 & 2.18$\times$ \\
COIECD & \underline{18.56} & 61.96 & \underline{33.14} & \underline{56.32} & \underline{17.34} & \underline{59.76} & 2.42$\times$ \\
\midrule
\multicolumn{8}{l}{\textit{Frontier Model}} \\
Llama-3.1-70B & \textbf{42.15} & \textbf{92.45} & \textbf{46.10} & \textbf{72.40} & \textbf{52.07} & \textbf{82.20} & 5.30$\times$ \\
\midrule
\rowcolor[HTML]{D9EAD3}
\textbf{SFAD (Ours)} & \underline{41.34} & \underline{90.93} & \underline{43.79} & \underline{71.13} & \underline{51.97} & \underline{81.99} & \textbf{0.78$\times$} \\
\bottomrule
\end{tabular}%
}
\end{table}

%% file: Section/6_Analysis.tex
\section{Analysis}
\subsection{Effectiveness of ConFide and DPO }
\label{Data_DPO}

To isolate the contributions of our data construction pipeline and alignment strategy, we evaluate three draft model variants on 200 held-out knowledge-conflict instances from LLM-AggreFact. As illustrated in Figure~\ref{fig:dpo_effectiveness}, the transition from supervised fine-tuning (SFT) to preference optimization (DPO) triggers a fundamental shift in model behavior: while ConFide+SFT remains heavily anchored to internal priors (high $M_R$), DPO-trained variants demonstrate a superior ability to suppress parametric bias in favor of contextual evidence. 

Notably, ConFide+DPO consistently outperforms ConFiQA+DPO across all metrics, validating the efficacy of our atomic perturbation mechanism. By exposing the model to hard negatives—such as entity swaps and relation inversions—ConFide provides a more granular discriminative signal that sharpens the model's sensitivity to factual nuances. This contrastive optimization effectively penalizes hallucinated completions that appear superficially plausible but lack contextual grounding. \textbf{These results confirm that our data construction pipeline combined with preference optimization is crucial for training context-faithful draft models capable of effective speculative decoding.}

\begin{figure}[t]
\centering
\includegraphics[width=\columnwidth]{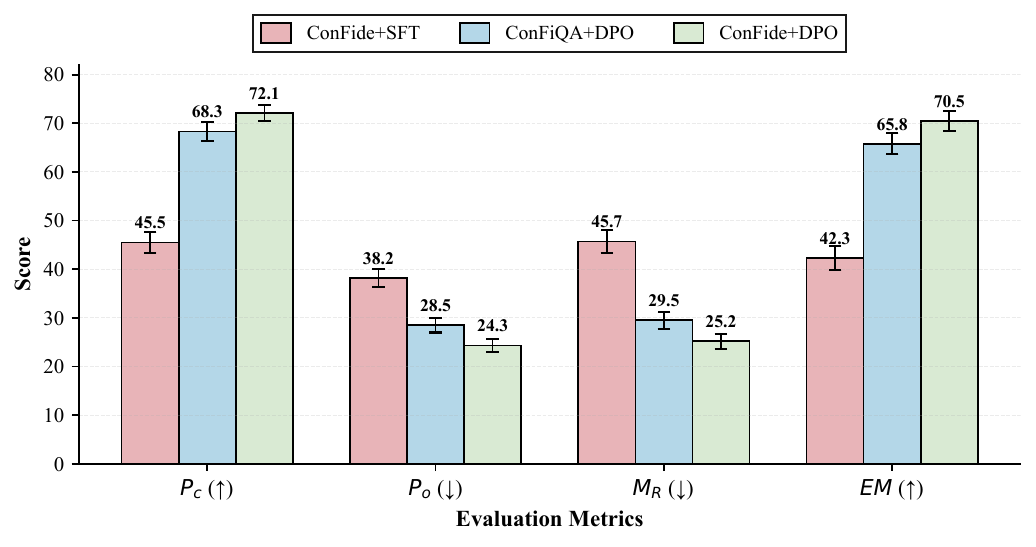}
\caption{Impact of training strategies on knowledge conflict resolution. 
ConFide+DPO consistently outperforms baselines across context faithfulness 
($P_c$, $EM$) and memory reliance ($M_R$, $P_o$) metrics.}
\label{fig:dpo_effectiveness}
\end{figure}

\begin{figure}[t]
\centering
\includegraphics[width=\columnwidth]{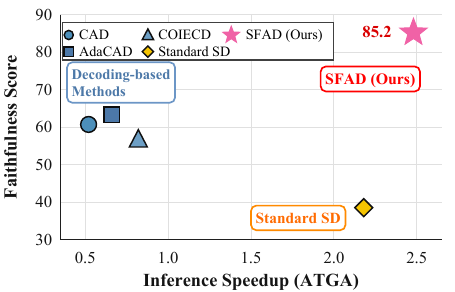}
\caption{Latency-faithfulness trade-off analysis. SFAD achieves Pareto-optimal performance, simultaneously delivering 2.48$\times$ speedup and 85.2 faithfulness score. Decoding-level baselines suffer from severe latency penalties ($<$0.85$\times$) despite moderate faithfulness, while standard SD prioritizes speed (2.18$\times$) at the cost of poor faithfulness (38.5). SFAD outperforms the best baseline by 21.7 points while maintaining 2.9$\times$ faster inference.}
\label{fig:tradeoff}
\end{figure}

\subsection{Latency-Faithfulness Trade-off Analysis}
\label{sec:tradeoff}
A critical question in deploying context-faithful LLMs is whether achieving high factuality necessitates sacrificing inference speed. To address this, we analyze the trade-off between inference acceleration and contextual faithfulness across different decoding strategies. We evaluate all methods on the same 200-instance LLM-AggreFact test set used in Section~\ref{Data_DPO}, measuring both Average Token Generation Acceleration (ATGA, Eq.~\eqref{eq:atga}) and Faithfulness Score (MiniCheck FaithScore). Figure~\ref{fig:tradeoff} visualizes this relationship, revealing three distinct performance regimes.

\textbf{Decoding-level baselines exhibit poor speed-faithfulness balance.} CAD, AdaCAD, and COIECD achieve moderate faithfulness scores but suffer severe latency penalties (ATGA 0.52$\times$--0.82$\times$). They need to perform two forward passes for both contextualized and baseline inputs. This introduces additional token overhead, which slows down the model's inference speed.

\textbf{Standard speculative decoding prioritizes speed over faithfulness.} While achieving 2.18$\times$ speedup, vanilla SD with an unaligned draft model produces drastically poor faithfulness, falling below even the decoding baselines. The draft model's lack of contextual grounding causes it to propose tokens based on parametric priors rather than provided evidence, leading to significant hallucinations in knowledge-intensive scenarios.

\textbf{SFAD achieves Pareto-optimal performance.} SFAD strikes an optimal balance between contextual faithfulness and inference speed, simultaneously delivering 2.48$\times$ speedup and 85.2 faithfulness score.

\subsection{Epistemic Friction Analysis: Detecting Confident Hallucinations}
To validate our Epistemic Friction mechanism, we analyze 1,200 samples from MQUAKE~\cite{zhong-etal-2023-mquake}, tracking $D_{JS}$, $\kappa_t$, and $\mathcal{F}_t$ across generation.
Figure~\ref{fig:epistemic_friction} illustrates metric dynamics across a representative generation sequence. The raw divergence $D_{JS}$ fluctuates frequently, with peaks exceeding $\tau = 0.5$, but relying solely on $D_{JS}$ would trigger excessive false positives at stylistic variations.

\begin{figure}[t]
    \centering
    \includegraphics[width=\linewidth]{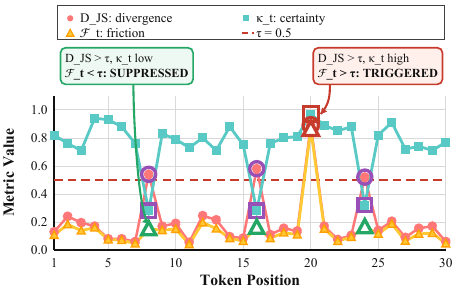}
    \caption{Epistemic Friction dynamics. Purple circles mark positions where $D_{JS} > \tau$ is suppressed by low $\kappa_t$. The red square marks a confident hallucination that triggers steering.}
    \label{fig:epistemic_friction}
\end{figure}

The specialist certainty $\kappa_t$ provides crucial filtering. At positions marked with purple circles, $D_{JS}$ slightly exceeds $\tau$ (0.52--0.58) but $\kappa_t$ remains low (0.28--0.32), indicating uncertainty at linguistic alternatives. Consequently, $\mathcal{F}_t = D_{JS} \times \kappa_t$ is suppressed below threshold (0.15--0.17), correctly avoiding intervention.

In contrast, the red-square position exhibits a confident hallucination: $D_{JS} \approx 0.89$ combined with $\kappa_t \approx 0.97$ produces a sharp spike in $\mathcal{F}_t$ ($\sim$0.86), triggering full logit steering. By conditioning on both distributional conflict and specialist confidence, \textbf{Epistemic Friction provides a principled trigger for factuality correction, avoiding the over-correction pitfalls of divergence-only metrics.}

\subsection{Quantifying Logit Steering Impact}

To evaluate our asymmetric logit steering, we measure the average probability of faithful tokens across three decoding strategies on the MQUAKE test set, where faithful tokens are context-grounded entities that correctly answer the query.

Table~\ref{tab:logit_injection} reveals the fundamental difference between verification-based and steering-based approaches. The original target model assigns 18.73\% probability to faithful tokens, reflecting parametric bias where internal knowledge conflicts with contextual evidence. While the target model considers the faithful token, it remains overshadowed by parametric priors.

Standard speculative decoding provides virtually no correction, with faithful token probability remaining at 18.91\%. This stagnation is inherent to SD's design: the verification mechanism determines whether to accept or reject draft tokens based on $P(\text{accept}) \propto \frac{P_M(x)}{P_m(x)}$, but it does not modify the target model's output distribution. When the draft proposes a faithful token that the target assigns low probability, SD simply rejects it and resamples from the original $P_M$, perpetuating the same parametric bias. This reveals a fundamental limitation: verification-only approaches operate through accept/reject decisions without modifying the target distribution, thus they cannot correct distributional biases but only accelerate generation.

\begin{table}[t]
\centering
\footnotesize
\caption{Average probability of faithful tokens. Standard SD provides negligible improvement, while SFAD achieves substantial gains through logit steering.}
\label{tab:logit_injection}
\begin{tabular}{lccc}
\toprule
\textbf{Strategy} & \textbf{Prob (\%)} & \textbf{Abs. Gain} & \textbf{Rel. Gain} \\
\midrule
Original Target & 18.73 & -- & 1.0× \\
Standard SD & 18.91 & +0.18 & 1.01× \\
\rowcolor[HTML]{D9EAD3}SFAD (Ours) & 62.45 & +43.72 & 3.33× \\
\bottomrule
\end{tabular}
\end{table}

\begin{table}[t]
\centering
\small
\caption{General utility evaluation.}
\label{tab:utility}
\setlength{\tabcolsep}{2.75pt} 
\begin{tabular}{lcccccc} 
\toprule
& \textbf{GSM8K} & \multicolumn{5}{c}{\textbf{Just-Eval (1--5)}} \\
\cmidrule(lr){2-2} \cmidrule(lr){3-7}
\textbf{Method} & \textbf{Acc} & \textbf{Help.} & \textbf{Clar.} & \textbf{Fact.} & \textbf{Depth} & \textbf{Eng.} \\
\midrule
Target (Greedy) & 91.35 & 4.15 & 4.90 & 4.30 & 4.55 & 4.75 \\ 
Standard SD     & 91.32 & 4.14 & 4.89 & 4.28 & 4.54 & 4.74 \\ 
\rowcolor[HTML]{D9EAD3} \textbf{SFAD (Ours)} & 91.27 & 4.11 & 4.88 & 4.33 & 4.53 & 4.73 \\
\bottomrule
\end{tabular}
\end{table}

In contrast, SFAD elevates faithful token probability to 62.45\%, achieving a 3.33× relative improvement and a 243× larger absolute gain compared to standard SD. This dramatic boost results from our asymmetric steering mechanism (Eq.~\eqref{eq:steering}): when epistemic friction $\mathcal{F}_t$ exceeds threshold $\tau$, the system triggers ReLU-based logit injection $z^*_t = z_{M,t} + \lambda_t \cdot \text{ReLU}(z_{m,t} - z_{M,t})$, directly amplifying the draft model's high logits for contextually faithful tokens. Unlike verification that merely accepts or rejects, steering fundamentally reshapes the target's output distribution, transforming faithful tokens from minority candidates (18.73\%) to dominant choices (62.45\%). Critically, this intervention occurs selectively as it only activates when 
both distributional conflict and specialist conviction are present, maintaining 
the 2.48× speedup while achieving 85.2 faithfulness score. \textbf{This analysis demonstrates that logit-level steering provides a principled mechanism for resolving knowledge conflicts, achieving context faithfulness unattainable through verification alone.}

\subsection{SFAD Preserves General Utility}
We evaluate on GSM8K~\cite{cobbe2021training} and Just-Eval~\cite{lin2024unlocking} to verify SFAD maintains general capabilities. As shown in Table~\ref{tab:utility}, SFAD achieves 91.27\% on GSM8K with negligible Just-Eval degradation. This stems from the selective nature of $\mathcal{F}_t$: when no knowledge conflict exists, friction remains below $\tau$ and the system defaults to standard speculative decoding, confirming SFAD's factuality enhancement does not sacrifice general utility or reasoning capability.

\subsection{Generalization and Parameter Analysis}
To evaluate generalization across model families, we apply SFAD with Llama-3.1-8B as the target model. As shown in Table~\ref{tab:foundation_qa_llama}, SFAD consistently improves faithfulness while maintaining inference efficiency. The coefficient $\gamma$ modulates specialist certainty $\kappa_t$ to regulate steering sensitivity. As shown in Table~\ref{tab:gamma_ablation}, lower $\gamma$ increases intervention frequency but risks incorporating unreliable signals, while higher $\gamma$ improves speedup at the cost of faithfulness. 
\begin{table}[t]
\centering
\small
\caption{Llama-3.1-8B results on Foundation QA.}
\label{tab:foundation_qa_llama}
\resizebox{\columnwidth}{!}{%
\begin{tabular}{lcccc}
\toprule
\textbf{Method} & \textbf{TriviaQA} & \textbf{HotpotQA} & \textbf{PopQA} & \textbf{Rel. Latency} \\
\midrule
\multicolumn{5}{l}{\textit{Vanilla Baseline}} \\
Llama-3.1-8B & 49.23 & 38.45 & 75.18 & 1.00$\times$ \\
\midrule
\multicolumn{5}{l}{\textit{Decoding-level Baselines (Llama-3.1-8B)}} \\
CAD & 37.82 & 36.21 & 68.45 & 2.00$\times$ \\
AdaCAD & 78.56 & 42.31 & 74.28 & 2.18$\times$ \\
COIECD & \underline{79.43} & \underline{42.88} & \underline{75.56} & 2.35$\times$ \\
\midrule
\rowcolor[HTML]{D9EAD3}
\textbf{SFAD (Ours)} & \textbf{81.67} & \textbf{48.73} & \textbf{83.21} & \textbf{0.85$\times$} \\
\bottomrule
\end{tabular}%
}
\end{table}

\begin{table}[t]
\centering
\small
\setlength{\abovecaptionskip}{3pt}
\setlength{\belowcaptionskip}{5pt}
\caption{Ablation of sharpening coefficient $\gamma$.}
\label{tab:gamma_ablation}
\setlength{\tabcolsep}{2.0pt}
\begin{tabular}{ccccc}
\toprule
$\boldsymbol{\gamma}$ & \textbf{Steer (\%)} & \textbf{Faith.} ($\uparrow$) & \textbf{Speedup} ($\uparrow$) & \textbf{R-L} ($\uparrow$) \\
\midrule
1 & 31.2 & 83.7 & 2.31$\times$ & 15.89 \\
\rowcolor[HTML]{D9EAD3} \textbf{2 (Default)} & \textbf{22.4} & \textbf{85.2} & 2.48$\times$ & \textbf{16.32} \\
3 & 15.8 & 84.1 & 2.61$\times$ & 16.18 \\
4 & 9.3 & 78.5 & \textbf{2.74$\times$} & 15.94 \\
\bottomrule
\end{tabular}
\end{table}

%% file: Section/7_Conclusion.tex
\section{Conclusion}
In this work, we propose \textbf{SFAD}, the first speculative decoding framework unifying contextual faithfulness with acceleration. Leveraging a context-faithful drafter as factuality sentinel, SFAD detects hallucinations via epistemic friction. Drafters are trained on ConFide, a dataset with atomic-level hallucination perturbations. At inference, SFAD selectively steers target logits upon conflict detection while preserving efficiency, achieving simultaneous speedup and faithfulness gains.

%% file: Section/8_impact_statement.tex
\section*{Limitations}
SFAD requires a domain-aligned draft model trained via DPO, which introduces additional data construction overhead compared to standard speculative decoding. Furthermore, the friction threshold $\tau$ may require tuning when applied to out-of-distribution domains. 

%% file: Section/Ack.tex
\section*{Acknowledgment}
Di Wang is supported in part by the funding BAS/1/1689-01-01, RGC/3/7125-01-01, FCC/1/5940-20-05, FCC/1/5940-06-02, and King Abdullah University of Science and Technology (KAUST) – Center of Excellence for Generative AI, under award number 5940 and a gift from Google.

\noindent
Lijie Hu is supported in part by the funding BF0100.

%% file: Section/appendix_two.tex
\twocolumn
\raggedbottom

\section{Parameter Analysis}
\label{parameter}

\subsection{Role of Contextual Plausibility Mask (\texorpdfstring{$\eta$}{eta})}
\label{subsec:eta_analysis}

A pivotal challenge in logit-level steering is the potential for linguistic degradation: a specialist may propose a factually accurate token that is syntactically incompatible with the target model's prefix. To evaluate how our \textbf{Contextual Plausibility Mask (CPM)} mitigates this risk, we conduct an ablation study on the threshold $\eta$ using the XSum dataset.

\begin{table}[H]
\centering
\small
\caption{Ablation of the Plausibility Threshold $\eta$ on XSum. $\eta=0.1$ is the default setting used in Table \ref{tab:summarization}; $\eta=0$ denotes steering without the linguistic safety guard.}
\label{tab:eta_ablation}
\resizebox{\columnwidth}{!}{%
\begin{tabular}{lccc}
\toprule
\textbf{Configuration} & \textbf{ROUGE-L} ($\uparrow$) & \textbf{BERT-P} ($\uparrow$) & \textbf{AlignScore} ($\uparrow$) \\
\midrule
SFAD (Standard, $\eta=0.1$) & \underline{16.32} & \textbf{93.97} & 87.37 \\
SFAD w/o CPM ($\eta=0$) & 14.78 & 91.25 & \textbf{87.49} \\
Strict CPM ($\eta=0.5$) & 15.42 & 92.84 & 81.12 \\
\midrule
\textit{Vanilla Qwen3-14B} & 13.67 & 91.67 & 72.86 \\
\bottomrule
\end{tabular}%
}
\end{table}

As illustrated in Table \ref{tab:eta_ablation}, removing the mask ($\eta=0$) yields the highest \textit{AlignScore} (87.49), confirming that the specialist model is indeed capable of forcing factual corrections. However, this comes at a substantial cost to linguistic integrity: \textbf{ROUGE-L drops by 1.54 points} and \textbf{BERT-P falls by 2.72 points} compared to the standard SFAD. Qualitative analysis reveals that without CPM, the model often injects correct entities with mismatched syntax (e.g., incorrect prepositional usage or broken noun phrases).

By introducing $\eta=0.1$, SFAD successfully filters out these "linguistically dissonant" corrections. This explains the competitive results in Table \ref{tab:summarization}, where SFAD nearly matches the 70B Frontier Model's performance in both fluency and factuality. The CPM acts as a "safety guard," ensuring that steering only occurs within the target model's plausible manifold, thus achieving a superior Pareto-optimal balance between knowledge correction and natural language generation.

\subsection{Sensitivity Analysis of Friction Threshold \texorpdfstring{$\tau$}{tau}}
\label{subsec:tau_sensitivity}

The friction threshold $\tau$ serves as the primary hyperparameter for calibrating the sensitivity of SFAD's hallucination detector. It determines the operating point on the factuality-efficiency Pareto frontier by controlling the ratio between the \textit{Steering Path} and the \textit{Fast Path}. We evaluate the impact of $\tau$ on the LLM-AggreFact test set, reporting the Steering Ratio (percentage of tokens where $F_t \geq \tau$), Faithfulness Score (MiniCheck), and inference speedup (ATGA).

\begin{table}[H]
\centering
\small
\caption{Sensitivity analysis of $\tau$ on LLM-AggreFact. \textit{Steering Ratio} denotes the percentage of generated tokens processed via the logit steering path. Speedup is relative to greedy decoding without speculation.}
\label{tab:tau_sensitivity}
\resizebox{\columnwidth}{!}{%
\begin{tabular}{lccc}
\toprule
\textbf{Threshold $\tau$} & \textbf{Steering Ratio (\%)} & \textbf{Faithfulness Score} ($\uparrow$) & \textbf{Speedup (ATGA)} ($\uparrow$) \\
\midrule
$\tau = 0.1$ (Aggressive) & 48.2\% & \textbf{86.7} & 2.12$\times$ \\
$\tau = 0.3$              & 35.6\% & 85.9 & 2.31$\times$ \\
\rowcolor[HTML]{D9EAD3}
$\tau = 0.5$ (Default)    & 22.4\% & \underline{85.2} & \underline{2.48$\times$} \\
$\tau = 0.7$              & 12.1\% & 62.4 & 2.65$\times$ \\
$\tau = 0.9$ (Conservative) & 4.5\%  & 41.8 & \textbf{2.82$\times$} \\
\midrule
\textit{Standard SD} & 0.0\% & 38.5 & 2.18$\times$ \\
\bottomrule
\end{tabular}%
}
\end{table}

As shown in Table \ref{tab:tau_sensitivity}, the \textit{Steering Ratio} decreases monotonically as $\tau$ increases, reflecting a more selective intervention strategy. When $\tau=0.1$, the model intervenes frequently, achieving the highest faithfulness but incurring a latency penalty due to frequent target model logit computations. Conversely, at $\tau=0.9$, the system defaults to standard speculative decoding (Fast Path) most of the time, maximizing speed but failing to resolve knowledge conflicts.

Crucially, our default setting of $\tau=0.5$ achieves a "sweet spot": it resolves the majority of hallucinations (surpassing standard SD by 46.7 points) while maintaining a high speedup of 2.48$\times$. This demonstrates that SFAD performs \textbf{surgical interventions} only when necessary.

\subsection{Effectiveness of Asymmetric Logit Fusion Operators}
\label{app:fusion_ablation}

To empirically validate the theoretical advantage of our asymmetric steering law (Theorem~\ref{thm:asymmetric-switching}), we conduct an ablation study comparing different logit fusion operators. In this experiment, we keep the adaptive gating mechanism ($\lambda_t$) and the Contextual Plausibility Mask (CPM) constant, only varying how the specialist's logits $z_m$ are integrated with the generalist's logits $z_M$ when steering is triggered. We evaluate four strategies:
\begin{enumerate}[leftmargin=*,label=(\roman*)]
    \item \textbf{Linear Sum:} $z^* = z_M + \lambda_t z_m$, which directly adds the specialist's signals.
    \item \textbf{Linear Interpolation:} $z^* = (1-\lambda_t)z_M + \lambda_t z_m$, a standard weighted average used in model blending.
    \item \textbf{Subtractive Contrast:} $z^* = z_M + \lambda_t (z_M - z_{base})$, similar to contrastive decoding but using the specialist as the positive signal.
    \item \textbf{Asymmetric Steering (SFAD):} $z^* = z_M + \lambda_t \text{ReLU}(z_m - z_M)$, our proposed unidirectional injection.
\end{enumerate}

The results on PopQA and HotpotQA are summarized in Table~\ref{tab:fusion_ablation}.

\begin{table}[H]
\centering
\small
\caption{Ablation of Logit Fusion Operators. Faithfulness is measured by Exact Match (EM) for PopQA and MiniCheck for HotpotQA. Fluency is represented by ROUGE-L (R-L).}
\label{tab:fusion_ablation}
\resizebox{\columnwidth}{!}{%
\begin{tabular}{lcccc}
\toprule
\textbf{Fusion Strategy} & \multicolumn{2}{c}{\textbf{PopQA}} & \multicolumn{2}{c}{\textbf{HotpotQA}} \\
\cmidrule(lr){2-3} \cmidrule(lr){4-5}
& \textbf{EM} ($\uparrow$) & \textbf{R-L} ($\uparrow$) & \textbf{Faith} ($\uparrow$) & \textbf{R-L} ($\uparrow$) \\
\midrule
Linear Sum           & 78.42 & 14.12 & 46.21 & 35.89 \\
Linear Interpolation  & 81.35 & 15.67 & 48.55 & 39.42 \\
Subtractive Contrast & 83.12 & 13.05 & 49.32 & 34.11 \\
\midrule
\rowcolor[HTML]{D9EAD3}\textbf{SFAD (Ours)} & \textbf{86.39} & \textbf{16.32} & \textbf{52.19} & \textbf{41.34} \\
\bottomrule
\end{tabular}%
}
\end{table}

As shown in Table~\ref{tab:fusion_ablation}, the \textbf{Asymmetric Steering} operator consistently outperforms other fusion methods. \textit{Linear Sum} and \textit{Interpolation} tend to dilute the generalist's linguistic priors, leading to a drop in ROUGE-L (fluency). Conversely, while \textit{Subtractive Contrast} can highlight factual differences, it often results in "negative constraints" that suppress even valid tokens, harming coherence. SFAD's ReLU-based injection ensures that the specialist only intervenes to \textit{boost} tokens where it has higher confidence than the generalist, preserving the natural language manifold of the target model while effectively correcting factual errors.

\section{Generalization Across Model Families}
\label{app:model-family}

We provide more results on Llama-3.1-8B in Table \ref{tab:summarization_llama} and Table \ref{tab:longform_qa_llama}.

\begin{table}[H]
\centering
\small
\caption{Llama-3.1-8B results on Summarization.}
\label{tab:summarization_llama}
\resizebox{\columnwidth}{!}{%
\begin{tabular}{lccccc}
\toprule
& \multicolumn{3}{c}{\textbf{XSum}} & \textbf{TofuEval} & \\
\cmidrule(lr){2-4} \cmidrule(lr){5-5}
\textbf{Method} & \textbf{R-L} & \textbf{BERT-P} & \textbf{AlignScore} & \textbf{AlignScore} & \textbf{Rel. Latency} \\
\midrule
\multicolumn{6}{l}{\textit{Vanilla Baseline}} \\
Llama-3.1-8B & 12.89 & 90.83 & 71.24 & 58.12 & 1.00$\times$ \\
\midrule
\multicolumn{6}{l}{\textit{Decoding-level Baselines (Llama-3.1-8B)}} \\
CAD & 13.76 & 92.87 & 82.56 & 81.45 & 2.00$\times$ \\
AdaCAD & 14.08 & \underline{93.41} & \underline{84.03} & \underline{83.29} & 2.22$\times$ \\
COIECD & 12.91 & 90.22 & 72.15 & 59.34 & 2.48$\times$ \\
\midrule
\rowcolor[HTML]{D9EAD3}
\textbf{SFAD (Ours)} & \textbf{15.47} & \textbf{93.12} & \textbf{85.68} & \textbf{85.87} & \textbf{0.87$\times$} \\
\bottomrule
\end{tabular}%
}
\end{table}

\begin{table}[H]
\centering
\small
\setlength{\tabcolsep}{4pt}
\caption{Llama-3.1-8B results on Long-Form QA.}
\label{tab:longform_qa_llama}
\resizebox{\columnwidth}{!}{%
\begin{tabular}{lcccccc|c}
\toprule
& \multicolumn{2}{c}{\textbf{CLAPNQ}} & \multicolumn{2}{c}{\textbf{ExpertQA}} & \multicolumn{2}{c}{\textbf{HAGRID}} & \\
\cmidrule(lr){2-3} \cmidrule(lr){4-5} \cmidrule(lr){6-7}
\textbf{Method} & \textbf{R-L} & \textbf{Faith} & \textbf{R-L} & \textbf{Faith} & \textbf{R-L} & \textbf{Faith} & \textbf{Rel. Latency} \\
\midrule
\multicolumn{8}{l}{\textit{Vanilla Baseline}} \\
Llama-3.1-8B & 16.34 & 57.89 & 30.21 & 49.56 & 16.12 & 55.87 & 1.00$\times$ \\
\midrule
\multicolumn{8}{l}{\textit{Decoding-level Baselines (Llama-3.1-8B)}} \\
CAD & 17.41 & 58.43 & 32.15 & 51.72 & 17.03 & 56.45 & 2.00$\times$ \\
AdaCAD & 17.58 & 60.51 & 31.49 & 52.93 & 16.31 & 58.12 & 2.21$\times$ \\
COIECD & \underline{17.72} & \underline{60.14} & \underline{32.78} & \underline{54.48} & \underline{16.52} & \underline{57.94} & 2.45$\times$ \\
\midrule
\rowcolor[HTML]{D9EAD3}
\textbf{SFAD (Ours)} & \textbf{19.87} & \textbf{68.45} & \textbf{34.56} & \textbf{62.37} & \textbf{18.93} & \textbf{66.23} & \textbf{0.81$\times$} \\
\bottomrule
\end{tabular}%
}
\end{table}

These cross-family results indicate that SFAD's gains are not tied to a single target backbone. Across summarization and long-form QA, the same decoding mechanism improves faithfulness while preserving the intended latency advantage, suggesting that the learned factuality-aware draft model and friction-triggered steering provide a portable correction signal across model families.

\clearpage
\twocolumn
\section{Examples of Factual Perturbation for ConFide}

We present two representative cases of how a faithful response is transformed into a hallucinated negative sample ($y_l$) through our atomic-level perturbation mechanism.

\begin{promptbox}{Case 1: Entity Swap ($\tau_{ent}$)}
    \footnotesize
    \sloppy
    \textbf{Source Context:} \\
    ``...Muto graduated from Keio University in Tokyo with an economics degree two weeks ago. He will join Chelsea's partner club Vitesse Arnhem on loan...'' 
    
    \vspace{0.8em}
    
    \textbf{Original Faithful Response ($y_w$):} \\
    ``Yoshinori Muto, who recently completed his economics degree at Keio University, is set to join Chelsea.'' 
    
    \vspace{0.8em}
    
    \textbf{Atomic Fact Decomposition:} \\
    $a_j = (\text{Yoshinori Muto}, \text{graduated from}, \text{\textbf{Keio University}})$ 
    
    \vspace{0.8em}
    
    \textbf{Perturbation Operation:} \\
    Swap target entity \textit{Keio University} with intra-context distractor \textit{\textbf{Vitesse Arnhem}}. 
    
    \vspace{0.8em}
    
    \textbf{Generated Hallucinated Response ($y_l$):} \\
    ``Yoshinori Muto, who recently completed his economics degree at \colorbox{red!20}{Vitesse Arnhem}, is set to join Chelsea.''

\end{promptbox}

\vspace{1em} 

\begin{promptbox}{Case 2: Relation Inversion ($\tau_{rel}$)}
    \footnotesize
    \sloppy
    \textbf{Source Context:} \\
    ``...Ogane claims that Chelsea's interest in Muto is not connected to the £200million sponsorship deal they signed with Yokohama Rubber...'' 
    
    \vspace{0.8em}
    
    \textbf{Original Faithful Response ($y_w$):} \\
    ``The club president stated that the move for Muto is independent of the sponsorship deal with Yokohama Rubber.'' 
    
    \vspace{0.8em}
    
    \textbf{Atomic Fact Decomposition:} \\
    \[
    \begin{aligned}
    a_j = (&\text{Chelsea's interest},\ \text{\textbf{is not connected to}},\\
           &\text{Sponsorship Deal})
    \end{aligned}
    \]
    
    \vspace{0.8em}
    
    \textbf{Perturbation Operation:} \\
    Apply logical inversion ($\neg r_j$) to the relation \textit{is not connected to}. 
    
    \vspace{0.8em}
    
    \textbf{Generated Hallucinated Response ($y_l$):} \\
    ``The club president stated that the move for Muto \colorbox{red!20}{is a direct result of} the sponsorship deal with Yokohama Rubber.''

\end{promptbox}

\par\vspace{1em}
\section{Algorithms}
\label{app:algorithms}

This section provides the algorithmic details of the proposed framework.
Algorithm~\ref{alg:sfad_process} summarizes the inference-time procedure of SFAD,
including friction-based conflict detection and asymmetric logit steering.
Algorithm~\ref{alg:confide_construction} describes the construction process of
the ConFide preference dataset used for training the factuality-aware draft model.

\begin{algorithm}[H]
\footnotesize
\sloppy
   \caption{SFAD: Speculative Factuality-Aware Decoding}
   \label{alg:sfad_process}
\setlength{\algorithmicindent}{0.75em}
\begin{algorithmic}[1]
   \STATE {\bfseries Input:} Target model $M$, DPO-aligned draft model $m$, prefix context $x_{<t}$, friction threshold $\tau$, sharpening coefficient $\gamma$, plausibility threshold $\eta$, sigmoid scale $\beta$.
   \STATE {\bfseries Output:} Context-faithful generated token $x_t$.
   
   \WHILE{$x_t \neq \text{EOS}$}
      \STATE // \textit{Step 1: Distribution Computation}
      \STATE Compute logit vectors $\mathbf{z}_{M,t}$ from $M(x_{<t})$ and $\mathbf{z}_{m,t}$ from $m(x_{<t})$
      \STATE Compute probabilities $\mathbb{P}_{M} = \text{Softmax}(\mathbf{z}_{M,t})$ and $\mathbb{P}_{m} = \text{Softmax}(\mathbf{z}_{m,t})$
      
      \STATE // \textit{Step 2: Factuality Conflict Detection}
      \STATE Calculate Specialist Certainty: $\kappa_t = \left( 1 - \frac{-\sum_{v \in \mathcal{V}} \mathbb{P}_m(v) \log \mathbb{P}_m(v)}{\log |\mathcal{V}|} \right)^\gamma$
      \STATE Calculate Epistemic Friction: $\mathcal{F}_t = D_{JS}(\mathbb{P}_M \parallel \mathbb{P}_m) \cdot \kappa_t$
      
      \IF{$\mathcal{F}_t \geq \tau$}
         \STATE // \textit{Steering Path: Distribution Correction}
         \STATE Identify plausible set: $\mathcal{V}_{CPC} = \{x \in \mathcal{V} : \mathbb{P}_m(x|x_{<t}) \geq \eta \cdot \max_{w \in \mathcal{V}} \mathbb{P}_m(w|x_{<t})\}$
         \STATE Compute gating scalar: $\lambda_t = \frac{1}{1 + \exp(-\beta(\mathcal{F}_t - \tau))}$
         \STATE Apply Asymmetric Logit Steering: 
         \STATE $\mathbf{z}^*_t = \mathbf{z}_{M,t} + \lambda_t \cdot \max(0, \mathbf{z}_{m,t} - \mathbf{z}_{M,t}) \cdot \mathbb{I}(x \in \mathcal{V}_{CPC})$
         \STATE Sample next token: $x_t \sim \text{Softmax}(\mathbf{z}^*_t)$
      \ELSE
         \STATE // \textit{Fast Path: Standard Speculative Decoding}
         \STATE Sample draft token $\tilde{x}_t$ from $\mathbb{P}_m$
         \STATE $x_t \leftarrow \text{Verify}(\tilde{x}_t, \mathbb{P}_M)$ \COMMENT{Standard rejection sampling}
      \ENDIF
      
      \STATE Update sequence: $x_{<t+1} \leftarrow [x_{<t}; x_t]$
   \ENDWHILE
\end{algorithmic}
\end{algorithm}

\begin{algorithm}[H]
\footnotesize
\sloppy
    \caption{ConFide: Factuality-Aware Preference Data Construction}
    \label{alg:confide_construction}
    \setlength{\algorithmicindent}{0.75em}
\begin{algorithmic}[1]
        \REQUIRE $\mathcal{D} = \{(x, y_{ref}, l)\}_{i=1}^N$, Teacher $\mathcal{M}_{T}$, Paraphraser $\pi_{para}$, Generator $f_{rec}$
        \STATE $\mathcal{P} \leftarrow \emptyset$
        \FOR{each $(x, y_{ref}, l) \in \mathcal{D}$}
            \STATE $\mathcal{A} = \{a_1, \dots, a_m\} \leftarrow f_{dec}(y_{ref})$ \COMMENT{Atomic Fact Decomposition}
            
            \STATE // \textit{Negative Generation: Perturb $a_j \in \mathcal{A}$ via Entity Swap, Numerical, or Relation Inversion}
            \STATE $a'_j \leftarrow \phi(a_j), \quad y_l \leftarrow f_{rec}(\mathcal{A} \setminus \{a_j\} \cup \{a'_j\})$ \COMMENT{Corrupt and Reconstruct}
            
            \STATE // \textit{Positive Refinement: Paraphrase faithful samples or utilize Teacher for corrections}
            \STATE $y_w \leftarrow (l = 1) \ ? \ \pi_{para}(y_{ref}) \ : \ \mathcal{M}_{T}(\text{correct } y_{ref} \text{ based on } x)$
            
            \STATE $\mathcal{P} \leftarrow \mathcal{P} \cup \{(x, y_w, y_l)\}$ \COMMENT{Assemble Preference Pair}
        \ENDFOR
        \STATE \textbf{return} $\mathcal{P}$ 
    \end{algorithmic}
\end{algorithm}
\clearpage
\twocolumn

\section{Theoretical Analysis of SFAD}
\label{app:theory}

In this section, we provide a rigorous formal justification for the \textit{Speculative Factuality-Aware Decoding} (SFAD) framework. Our analysis bridges the gap between empirical performance and distributional theory by focusing on three core pillars: (1) \textbf{Semantic Integrity}, establishing that our steering mechanism preserves the linguistic manifold; (2) \textbf{Factuality Amplification}, proving how DPO-induced margins yield exponential gains in faithful tokens; and (3) \textbf{Dynamic Stability}, demonstrating the robustness of asymmetric steering against the "zero-probability trap" prevalent in contrastive methods.

\subsection{Semantic Consistency and the Linguistic Manifold Bound}
\label{app:manifold_bound}

A fundamental challenge in logit steering is the \textit{alignment-fidelity trade-off}: factual corrections must not drive the model's output distribution off the natural language manifold $\mathcal{M}$. We formalize the role of the Contextual Plausibility Mask (CPM) as a projection operator that constrains the steering signal to the support of the generalist's distribution.

\begin{theorem}[Manifold Projection Bound]
\label{thm:manifold_projection}
Let $\mathcal{V}_{CPC} = \{v \in \mathcal{V} : P_m(v | x_{<t}) \geq \eta \cdot \max_w P_m(w | x_{<t})\}$ be the set of plausible candidates. For any steering intensity $\lambda_t \in [0,1]$, the Total Variation (TV) distance between the steered distribution $P^*$ and the original generalist distribution $P_M$ is bounded by the probability mass concentrated on the plausible manifold:
\begin{equation}
\begin{aligned}
 d_{TV}(P^*, P_M)
 &\leq \frac{1}{2}\sum_{v\in\mathcal{V}_{CPC}} P_M(v)
 \left|\frac{e^{\lambda_t\Delta z_v}}{\mathcal{Z}_t^*}-1\right| \\
 &\quad+\frac{1}{2}\left|1-\frac{1}{\mathcal{Z}_t^*}\right|
 (1-P_M(\mathcal{V}_{CPC})) .
\end{aligned}
\end{equation}
where $\Delta z_v = (z_{m,v} - z_{M,v})_+$ and $\mathcal{Z}_t^* = \mathbb{E}_{v \sim P_M} [e^{\lambda_t \Delta z_v}]$ is the partition normalization factor.
\end{theorem}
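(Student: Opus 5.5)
The plan is to write the steered distribution explicitly as an exponential tilt of $P_M$ and then split the total variation sum over $\mathcal{V}_{CPC}$ and its complement. From Eq.~\eqref{eq:steering}, the corrected logit is $z^*_v = z_{M,v} + \lambda_t \Delta z_v\,\mathbb{I}(v\in\mathcal{V}_{CPC})$, with $\Delta z_v=(z_{m,v}-z_{M,v})_+$. Taking the softmax gives
\begin{equation}
P^*(v) = \frac{P_M(v)\, e^{\lambda_t \Delta z_v \mathbb{I}(v\in\mathcal{V}_{CPC})}}{\sum_{u} P_M(u)\, e^{\lambda_t \Delta z_u \mathbb{I}(u\in\mathcal{V}_{CPC})}} .
\end{equation}
The generalist's partition function cancels, because $e^{z_{M,v}}/\sum_u e^{z_{M,u}} = P_M(v)$. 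So $P^*$ is $P_M$ reweighted by a nonnegative tilt, and the tilt is trivial, equal to $1$, off the plausible set.

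I will read $\mathcal{Z}_t^*$ as the normalizer of this masked tilt, namely $\mathbb{E}_{v\sim P_M}[e^{\lambda_t\Delta z_v \mathbb{I}(v\in\mathcal{V}_{CPC})}]$. Under that reading, $P^*(v)=P_M(v)e^{\lambda_t\Delta z_v}/\mathcal{Z}_t^*$ on $\mathcal{V}_{CPC}$ and $P^*(v)=P_M(v)/\mathcal{Z}_t^*$ off it. The statement's unmasked formula for $\mathcal{Z}_t^*$ agrees with this only if $\Delta z_v$ is understood as already masked, so I will adopt the masked convention and say so explicitly.

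Next comes the decomposition. I write
\begin{equation}
d_{TV}(P^*,P_M)=\tfrac12\sum_{v}|P^*(v)-P_M(v)| = \tfrac12\sum_{v}P_M(v)\left|\frac{P^*(v)}{P_M(v)}-1\right| .
\end{equation}
I then split the sum into $v\in\mathcal{V}_{CPC}$ and $v\notin\mathcal{V}_{CPC}$. On the first part the ratio is $e^{\lambda_t\Delta z_v}/\mathcal{Z}_t^*$, which yields the first term of the bound exactly. On the complement the ratio is the constant $1/\mathcal{Z}_t^*$, so that part equals $\tfrac12|1-1/\mathcal{Z}_t^*|\sum_{v\notin\mathcal{V}_{CPC}}P_M(v) = \tfrac12|1-1/\mathcal{Z}_t^*|(1-P_M(\mathcal{V}_{CPC}))$. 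This is the second term. Summing the two gives the claimed inequality, in fact as an equality, which is stronger. Tokens with $P_M(v)=0$ cannot occur because logits are finite.

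Finally, I would note the interpretive consequences, since they are what the theorem is used for. Because $\Delta z_v\ge 0$ and $\lambda_t\in[0,1]$, we have $\mathcal{Z}_t^*\ge1$. At $\lambda_t=0$ both terms vanish. The off-manifold term is controlled by $1-1/\mathcal{Z}_t^*\le \mathcal{Z}_t^*-1\le P_M(\mathcal{V}_{CPC})(e^{\lambda_t\max_v\Delta z_v}-1)$, so all deviation is driven by mass on the plausible set.

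The main obstacle is not any calculation but the normalization convention. If $\mathcal{Z}_t^*$ is taken literally with the unmasked $\Delta z$, the equality breaks, because tokens outside $\mathcal{V}_{CPC}$ would contribute tilted mass to the normalizer but not to $P^*$. In that case I would fall back on the triangle inequality, inserting the true normalizer and bounding the discrepancy. A cleaner alternative is simply to define $\Delta z_v$ to include the mask indicator, which matches Eq.~\eqref{eq:steering}. With that convention every step above is routine algebra.
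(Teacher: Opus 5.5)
Your proof is correct and is essentially the paper's argument: write $P^*$ as an exponential tilt of $P_M$ that is trivial off $\mathcal{V}_{CPC}$, split the total-variation sum over $\mathcal{V}_{CPC}$ and its complement, and read off the bound as an equality. Your masked reading of $\mathcal{Z}_t^*$ is exactly what the paper tacitly uses when it writes $P^*(v)=P_M(v)/\mathcal{Z}_t^*$ off the plausible set, and it is necessary, not cosmetic: with the literal unmasked normalizer the inequality can fail (two tokens with $z_{M,a}=z_{M,b}=0$, $z_{m,a}=s+3$, $z_{m,b}=s$, $\eta=0.1$, $\lambda_t=1$ give a right-hand side tending to about $0.476$ while $d_{TV}(P^*,P_M)\to 0.5$ as $s\to\infty$), so your triangle-inequality fallback could not rescue that version.
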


\begin{proof}
By the definition of the CPM-based steering in Eq.~\eqref{eq:steering}, for any token $v \notin \mathcal{V}_{CPC}$, the steering signal $\Delta z_v$ is nullified by the indicator function $\mathbb{I}(x \in \mathcal{V}_{CPC})$. Consequently, the steered logit $z^*_v$ equals the original logit $z_{M,v}$, and the probability simplifies to $P^*(v) = P_M(v) / \mathcal{Z}_t^*$. 

The TV distance is defined as $d_{TV}(P^*, P_M) = \frac{1}{2} \sum_{v \in \mathcal{V}} |P^*(v) - P_M(v)|$. Partitioning the vocabulary $\mathcal{V}$ into the plausible set $\mathcal{V}_{CPC}$ and its complement $\mathcal{V}^c$, we obtain:
\begin{align}
2d_{TV} &= \sum_{v\in\mathcal{V}_{CPC}}
 \left|\frac{P_M(v)e^{\lambda_t\Delta z_v}}{\mathcal{Z}_t^*}-P_M(v)\right| \nonumber\\
&\quad+\sum_{v\in\mathcal{V}^c}
 \left|\frac{P_M(v)}{\mathcal{Z}_t^*}-P_M(v)\right| \nonumber\\
&=\sum_{v\in\mathcal{V}_{CPC}}P_M(v)
 \left|\frac{e^{\lambda_t\Delta z_v}}{\mathcal{Z}_t^*}-1\right| \nonumber\\
&\quad+\left|\frac{1}{\mathcal{Z}_t^*}-1\right|
 \sum_{v\in\mathcal{V}^c}P_M(v).
\end{align}
Substituting $P_M(\mathcal{V}^c) = 1 - P_M(\mathcal{V}_{CPC})$ completes the proof. This bound illustrates that the divergence is strictly governed by the specialist's corrective signal magnitude within the linguistic manifold. Since $m$ and $M$ share a common pre-training ancestry, their high-density regions in $\mathcal{M}$ are naturally aligned, ensuring that steering only re-allocates mass within valid semantic clusters.
\end{proof}

\begin{remark}
\textbf{The Anchor Property.} Theorem \ref{thm:manifold_projection} reveals that SFAD acts as a "factual re-ranker" rather than a "stochastic generator." By anchoring the correction to the generalist's manifold $\mathcal{M}$, the framework prevents out-of-distribution (OOD) artifacts. Unlike additive noise or unconstrained logit shifts, the CPM ensures that the model never samples tokens that are linguistically nonsensical, explaining the high fluency scores observed in our experiments.
\end{remark}

\subsection{Factuality Amplification via DPO-Induced Margins}
\label{app:dpo_amplification}

While the previous section established safety, we now prove that SFAD effectively resolves knowledge conflicts by leveraging the preference gap instilled during the Direct Preference Optimization (DPO) phase.

\begin{proposition}[Exponential Posterior Gain]
\label{prop:exponential_gain}
Let $v_{ctxt}$ be a faithful token and $v_{mem}$ be a hallucinated token (favored by the target model's parametric prior). If the specialist $m$ satisfies the DPO optimality condition with a learned margin $\gamma_m = z_m(v_{ctxt}) - z_m(v_{mem})$, the posterior odds ratio under SFAD steering satisfies:
\begin{equation}
    \frac{P^*(v_{ctxt})}{P^*(v_{mem})} = \frac{P_M(v_{ctxt})}{P_M(v_{mem})} \cdot \exp\left( \lambda_t \cdot \gamma_{eff} \right)
\end{equation}
where $\gamma_{eff} = \max(0, z_{m,v_{ctxt}} - z_{M,v_{ctxt}})$ is the effective corrective margin.
\end{proposition}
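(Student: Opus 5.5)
The plan is to work directly at the level of logits, because the softmax normalizer cancels in any odds ratio. By Eq.~\eqref{eq:steering}, the steered logit of a token $v$ is $z^*_v = z_{M,v} + \lambda_t \Delta z_v \,\mathbb{I}(v\in\mathcal{V}_{CPC})$, with $\Delta z_v = (z_{m,v}-z_{M,v})_+$ as in Theorem~\ref{thm:manifold_projection}. Since $P^*(v) = e^{z^*_v}/\sum_w e^{z^*_w}$, the partition function $\mathcal{Z}_t^*$ drops out, and the ratio splits into an unsteered part and a correction:
\begin{equation*}
\frac{P^*(v_{ctxt})}{P^*(v_{mem})} = \exp\!\big(z^*_{v_{ctxt}} - z^*_{v_{mem}}\big) = \frac{P_M(v_{ctxt})}{P_M(v_{mem})}\cdot \exp\!\big(\lambda_t\,[\Delta z_{v_{ctxt}}\mathbb{I}_{ctxt} - \Delta z_{v_{mem}}\mathbb{I}_{mem}]\big).
\end{equation*}
The first factor uses the identity $P_M(a)/P_M(b) = e^{z_{M,a}-z_{M,b}}$. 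The proof therefore reduces to evaluating the bracket.

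Next I would handle the faithful token. I would show that $v_{ctxt}\in\mathcal{V}_{CPC}$, so that its indicator equals one and its contribution is exactly $\lambda_t\gamma_{eff}$. The natural argument is that under the DPO optimality condition the specialist ranks $v_{ctxt}$ above $v_{mem}$ ($\gamma_m>0$) and, in the conflict regime, places it at or near its mode. Hence $P_m(v_{ctxt}) \ge \eta\, p_t^{\max}$ for any $\eta\in(0,1)$ whenever $v_{ctxt}$ is the specialist's argmax. If it is not the argmax, I would state membership in $\mathcal{V}_{CPC}$ as an explicit hypothesis.

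The remaining and main obstacle is the hallucinated token: I must show that its correction term vanishes, i.e.\ $\Delta z_{v_{mem}}\mathbb{I}_{mem}=0$. There are two routes.
\begin{itemize}
\item If $v_{mem}\notin\mathcal{V}_{CPC}$, the indicator kills the term. This holds when the DPO margin is large enough that $P_m(v_{mem})/P_m(v_{ctxt}) = e^{-\gamma_m} < \eta$, i.e.\ $\gamma_m > \log(1/\eta)$, given that $v_{ctxt}$ is the specialist's mode.
\item Otherwise I need $z_{m,v_{mem}} \le z_{M,v_{mem}}$, so that the ReLU clips the term to zero. This expresses that the specialist assigns the memorized token no more logit mass than the generalist, whose parametric prior favors it.
\end{itemize}
The second route is the delicate point, because logits are only defined up to an additive shift. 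I would fix the usual convention that both models' logits are compared on a common scale (e.g.\ log-probabilities). I would then argue the inequality from $P_M(v_{mem}) > P_M(v_{ctxt})$ together with the DPO suppression of $v_{mem}$. Failing that, I would adopt it as the stated form of the DPO optimality condition.

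Under either route the bracket equals $\gamma_{eff}$, which gives the claimed identity. I would close by noting two consequences. First, $\gamma_{eff}\ge 0$, so the odds never decrease. Second, $\gamma_{eff} = \gamma_m + (z_{m,v_{mem}} - z_{M,v_{ctxt}})$ whenever $z_{m,v_{ctxt}} \ge z_{M,v_{ctxt}}$, which ties the amplification explicitly to the learned DPO margin $\gamma_m$.
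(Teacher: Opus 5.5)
Your proposal is correct and takes the same route as the paper: the partition function cancels in the odds ratio, and the ReLU removes the correction on $v_{mem}$ via $z_{m,v_{mem}}\le z_{M,v_{mem}}$, which the paper simply asserts as part of the ``factuality conflict'' setting. That inequality cannot be derived from $P_M(v_{mem})>P_M(v_{ctxt})$ and $\gamma_m>0$, since those compare tokens within each model rather than the two models at a fixed token, so your fallback of adopting it as a hypothesis is necessary. Your version is also more careful than the paper's, whose proof silently drops the indicator $\mathbb{I}(v\in\mathcal{V}_{CPC})$ from Eq.~\eqref{eq:steering}: your check that $v_{ctxt}\in\mathcal{V}_{CPC}$ and your alternative exclusion route $\gamma_m>\log(1/\eta)$ give the margin $\gamma_m$ an actual role, which it never has in the paper's argument.
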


\begin{proof}
Based on the Bradley-Terry model utilized in DPO, the specialist $m$ is trained to maximize the log-odds of contextually loyal pairs. Following the asymmetric steering law $z_t^*(v) = z_{M,t}(v) + \lambda_t \Delta z_v$, we examine the log-odds ratio:
\begin{align}
\log \frac{P^*(v_{ctxt})}{P^*(v_{mem})}
&= \bigl(z_{M,v_{ctxt}}+\lambda_t\Delta z_{v_{ctxt}}\bigr) \nonumber\\
&\quad-\bigl(z_{M,v_{mem}}+\lambda_t\Delta z_{v_{mem}}\bigr).
\end{align}
In a factuality conflict, $v_{mem}$ is the "surface pattern" favored by $M$, thus $z_{m,v_{mem}} \leq z_{M,v_{mem}}$, which implies $\Delta z_{v_{mem}} = 0$ due to the ReLU activation. Conversely, for the faithful token $v_{ctxt}$, the specialist (having been optimized via DPO) yields $z_{m,v_{ctxt}} > z_{M,v_{ctxt}}$. Thus:
\begin{align}
\log \frac{P^*(v_{ctxt})}{P^*(v_{mem})}
&= \log \frac{P_M(v_{ctxt})}{P_M(v_{mem})} \nonumber\\
&\quad+\lambda_t\bigl(z_{m,v_{ctxt}}-z_{M,v_{ctxt}}\bigr).
\end{align}
Exponentiating both sides demonstrates that the faithful candidate's probability mass is amplified exponentially relative to the hallucination, controlled by the steering intensity $\lambda_t$.
\end{proof}

\begin{remark}
\textbf{Selective Pressure.} This result highlights the advantage of SFAD over standard speculative decoding. While the latter performs binary rejection (0 or 1), SFAD applies continuous \textit{selective pressure}. Even if the generalist $M$ is initially biased toward a hallucination, a sufficiently certain specialist can shift the distribution's mode toward the truth without requiring a complete rejection of the sequence.
\end{remark}

\subsection{Information-Theoretic Validity of Epistemic Friction}

We now justify the use of Epistemic Friction $F_t$ as a dynamic trigger. It must distinguish between \textit{benign diversity} and \textit{factual conflict}.

\begin{lemma}[Friction as a High-Precision Trigger]
\label{lemma:friction_validity}
The Epistemic Friction $F_t = \mathcal{D}_{JS}(P_M \| P_m) \cdot \kappa_t$ is a lower bound on the expected reduction in epistemic uncertainty when the generalist conditions its output on the specialist's contextual evidence.
\end{lemma}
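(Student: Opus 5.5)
The natural route is to give the vague phrase ``expected reduction in epistemic uncertainty'' a concrete information-theoretic meaning, and then show $F_t$ sits below it. I will model the source of the next token as a latent binary variable $S\in\{M,m\}$ with a uniform prior. Conditioned on $S$, the token $V$ is drawn from $P_M$ or $P_m$. The mixture is then $\bar P=\tfrac12(P_M+P_m)$.

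With this model, the mutual information between the token and the source identity is exactly the Jensen--Shannon divergence:
\begin{equation}
I(V;S)=\mathbb{H}(\bar P)-\tfrac12\mathbb{H}(P_M)-\tfrac12\mathbb{H}(P_m)=\mathcal{D}_{JS}(P_M\|P_m).
\end{equation}
I interpret ``the expected reduction in epistemic uncertainty when the generalist conditions on the specialist's evidence'' as this entropy gap. Concretely, it is the reduction from the pooled predictive entropy $\mathbb{H}(\bar P)$ to the expected entropy once the source, i.e.\ which body of evidence is in force, is revealed. The first key step is to state this identification explicitly. It is the standard decomposition of JS divergence as the difference between the entropy of the mixture and the mean entropy.

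The second step is the multiplicative factor. I will use $0\le \mathbb{H}(P_m)\le\log|\mathcal{V}|$, which gives $1-\mathbb{H}(P_m)/\log|\mathcal{V}|\in[0,1]$. Since $\gamma\ge1$, it follows that $\kappa_t\in[0,1]$. Combined with $\mathcal{D}_{JS}\ge 0$, this yields
\begin{equation}
F_t=\mathcal{D}_{JS}(P_M\|P_m)\,\kappa_t\le \mathcal{D}_{JS}(P_M\|P_m)=I(V;S),
\end{equation}
which is the claimed lower bound. Equality holds iff $\kappa_t=1$, meaning $P_m$ is a point mass, or $\mathcal{D}_{JS}=0$.

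To justify the ``high-precision'' reading, I will also record the tightness behaviour. When the specialist is uncertain, $\kappa_t\to0$ and the bound collapses to zero, so benign diversity is not counted. When the specialist is confident, the gap $(1-\kappa_t)\mathcal{D}_{JS}$ vanishes. I will also note $\mathcal{D}_{JS}\le\log 2$, so $F_t$ is bounded and the threshold $\tau$ is meaningful.

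The main obstacle is the modelling step, not the algebra. The lemma as stated does not define ``epistemic uncertainty,'' so the proof is only as strong as the interpretation. I would first try the source-indicator mutual-information formalization above, because it makes the JS identity exact. If a reading closer to ``the generalist conditioning on the specialist'' is demanded, a fallback is to measure the reduction as $\mathbb{H}(\bar P)-\tfrac12\mathbb{H}(P_M)-\tfrac12\mathbb{H}(P_m)$, which is the same quantity. I would state this definition explicitly before the inequality chain, so that the remaining argument reduces to $\kappa_t\in[0,1]$.
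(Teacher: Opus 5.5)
Your argument is correct under the formalization you adopt, and it takes a genuinely different route from the paper. The paper never defines the ``expected reduction in epistemic uncertainty'' and proves no inequality at all. It only notes that $\mathcal{D}_{JS}$ is symmetric and bounded, that large divergence can come either from factual disagreement or from one model's high entropy, and that $F_t\to\mathcal{D}_{JS}$ as $\kappa_t\to1$ while $F_t\to0$ as $\kappa_t\to0$. From this it concludes, qualitatively, that $F_t$ filters out low-confidence disagreement.

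You supply the missing content by identifying $\mathcal{D}_{JS}(P_M\|P_m)$ with $I(V;S)=\mathbb{H}(\bar P)-\tfrac12\mathbb{H}(P_M)-\tfrac12\mathbb{H}(P_m)$ for a uniform two-source mixture. This is exactly the standard epistemic (disagreement, BALD-type) term for the two-member ensemble $\{M,m\}$, and that is the best justification you can give for your reading. The bound then reduces to $\kappa_t\in[0,1]$, and the paper's limiting argument reappears as your tightness remark.

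Your route buys an actual inequality. The cost is that the inequality is nearly tautological and rests entirely on the interpretation, which is symmetric in $M$ and $m$ rather than directional (``the generalist conditions on the specialist''). State this caveat explicitly, because under a more literal directional reading the claim can fail. Take the entropy drop $\mathbb{H}(P_M)-\mathbb{H}(P^*)$ produced by steering: if $P_M$ gives mass $0.9$ to a hallucinated token and $0.1$ to the faithful one, and steering equalizes them, the entropy restricted to these two tokens rises from about $0.33$ to $\log 2$ nats while $F_t>0$.

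Two small fixes. First, $\kappa_t\in[0,1]$ holds for every $\gamma>0$, not just $\gamma\ge1$. Second, $P_m$ is a softmax of finite logits, so it is never a point mass. Hence $\kappa_t<1$ strictly, and equality in $F_t\le\mathcal{D}_{JS}$ holds only when $P_M=P_m$.
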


\begin{proof}
The Jensen-Shannon divergence $\mathcal{D}_{JS}$ is a symmetric and bounded metric of distributional tension. In the context of LLMs, high divergence can arise from two sources: (i) factual disagreement or (ii) high entropy (uncertainty) in one model. 
Standard entropy-based triggers fail because they cannot distinguish between these cases. By weighting $\mathcal{D}_{JS}$ with the specialist certainty $\kappa_t = (1 - H(P_m)/\log |\mathcal{V}|)^\gamma$, $F_t$ acts as a filter. As $\kappa_t \to 1$ (certainty), $F_t \to \mathcal{D}_{JS}$. As $\kappa_t \to 0$ (uncertainty), $F_t \to 0$, suppressing the trigger. Thus, $F_t$ only activates steering when the specialist's corrective signal is both \textit{strong} and \textit{high-confidence}, minimizing noise injection.
\end{proof}

\subsection{Stability and Support Preservation}

A critical failure in subtractive contrastive methods (e.g., CAD) is the "Zero-Probability Trap," where valid tokens are suppressed to numerical underflow. 

\begin{theorem}[Numerical Stability and Support Preservation]
\label{thm:stability}
For any finite steering scale $\lambda_t \in [0, \infty)$ and any token $v \in \mathcal{V}$, the SFAD framework satisfies the support preservation property: if $P_M(v) > 0$, then $P^*(v) > 0$.
\end{theorem}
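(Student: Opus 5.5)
The plan is to write the steered distribution explicitly as a softmax of the corrected logits and show that each factor is strictly positive and finite. From Eq.~\eqref{eq:steering}, for every $v\in\mathcal{V}$ we have
\begin{equation*}
z^*_v = z_{M,v} + \lambda_t\,\Delta z_v\,\mathbb{I}(v\in\mathcal{V}_{CPC}), \qquad \Delta z_v = (z_{m,v}-z_{M,v})_+ \ge 0 ,
\end{equation*}
and $P^*(v) = e^{z^*_v}/\sum_{w} e^{z^*_w}$. The hypothesis $P_M(v)>0$ is equivalent to $z_{M,v}$ being a finite real number, i.e. not $-\infty$ as it would be for a hard-masked token. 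So the first step is to record that finite logits are exactly what support membership means under a softmax parametrization. I will also assume, as the models output, that all $z_{m,w}$ and $z_{M,w}$ with finite value stay finite.

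The second step is a lower bound on the numerator. Because the ReLU makes the injected term nonnegative and $\lambda_t\ge 0$, we get $z^*_v \ge z_{M,v} > -\infty$, and hence $e^{z^*_v} \ge e^{z_{M,v}} > 0$. This is where asymmetry matters: in a subtractive operator such as $z_M+\lambda(z_M-z_{base})$, the added term can be arbitrarily negative and drive the logit toward $-\infty$. Here the sign of the correction is fixed, so that cannot happen.

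The third step is to show the partition function $\sum_w e^{z^*_w}$ is finite, so that dividing by it does not send $P^*(v)$ to $0$. Each $z^*_w \le z_{M,w} + \lambda_t\,|z_{m,w}-z_{M,w}|$ is finite whenever $\lambda_t<\infty$ and the logits are finite. The vocabulary is finite, so the sum is finite. It is also at least $e^{z^*_v}>0$. Dividing gives $0<P^*(v)\le 1$.

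Equivalently, one can use the normalization from Theorem~\ref{thm:manifold_projection}, $P^*(v)=P_M(v)e^{\lambda_t\Delta z_v\mathbb{I}}/\mathcal{Z}_t^*$ with $1\le \mathcal{Z}_t^*<\infty$. The main obstacle is justifying finiteness of $\mathcal{Z}_t^*$ uniformly over $\lambda_t\in[0,\infty)$. This needs the standing assumption that the specialist logits $z_{m,w}$ are finite for every $w$ for which $z_{M,w}$ is finite. I would state that assumption explicitly, since otherwise $\Delta z_w$ could be infinite and collapse all other masses. As a closing remark, in floating-point arithmetic the same monotonicity, together with the usual max-subtraction in softmax, rules out underflow caused by the steering itself.
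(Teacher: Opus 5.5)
Your main line is the paper's: $\lambda_t\ge 0$ and $\mathrm{ReLU}\ge 0$ give $z^*_v\ge z_{M,v}$, hence a strictly positive numerator $e^{z^*_v}$. Your third step, checking that the normalizer $\sum_w e^{z^*_w}$ is finite, does not appear in the paper's proof. It is genuinely needed, since an infinite normalizer would give $P^*(v)=0$ despite a positive numerator. On this point your argument is more complete than the paper's.

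Three things need fixing.

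First, you are inconsistent about infinite logits. You invoke $-\infty$ logits to give the hypothesis $P_M(v)>0$ content, but then bound $z^*_w$ assuming ``the logits are finite''. Take $w\in\mathcal{V}_{CPC}$ with $z_{M,w}=-\infty$ and $z_{m,w}$ finite. Then $\Delta z_w=+\infty$, and $z^*_w=-\infty+\lambda_t\cdot\infty$ is undefined. Reading the update as $(1-\lambda_t)z_{M,w}+\lambda_t z_{m,w}$ gives $z^*_w=+\infty$ for every $\lambda_t>1$, and then $P^*(v)=0$, contradicting the claim. You need to fix the convention that target-masked tokens stay masked, i.e.\ take your second route, $P^*\propto P_M\,e^{\lambda_t\Delta z\,\mathbb{I}}$, as the definition. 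Under it, your assumption (specialist logits finite on $\mathrm{supp}\,P_M$) suffices. If instead all logits are finite reals, the claim is immediate from positivity of the softmax, and the monotonicity step does no work.

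Second, you only need, and only have, finiteness of $\mathcal{Z}_t^*$ for each fixed $\lambda_t$, not uniformly in $\lambda_t$. It grows like $e^{\lambda_t\max_{w\in\mathcal{V}_{CPC}}\Delta z_w}$. Whenever some token receives a positive boost, $P^*(v)\to 0$ as $\lambda_t\to\infty$ for every unboosted $v$. This is why the statement is restricted to finite $\lambda_t$.

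Third, your floating-point remark is false: monotonicity protects the numerator, not the ratio. After max-subtraction, the exponent of an unboosted token is $z_{M,v}-\max_w z^*_w$, which steering lowers. For example, with $z_M=(0,0)$, $z_m=(800,0)$ and $\lambda_t=1$, one gets $P^*(v_2)=1/(1+e^{800})$. This is $0$ in double precision, although $P_M(v_2)=1/2$. Support preservation is a statement about exact arithmetic only.
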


\begin{proof}
The steered logit is $z^*_v = z_{M,v} + \lambda_t \cdot \text{ReLU}(z_{m,v} - z_{M,v})$. Since $\text{ReLU}(\cdot) \geq 0$ and $\lambda_t \geq 0$, it holds that $z^*_v \geq z_{M,v}$ for all $v$. 
The steered probability is $P^*(v) = \exp(z^*_v) / \sum_w \exp(z^*_w)$. 
Since the exponential function is strictly positive, and $z^*_v$ is bounded from below by the original logit $z_{M,v}$, the numerator remains positive. Unlike subtractive methods where $z^* = z_M - \alpha z_{neg}$ can lead to $-\infty$ and numerical collapse, our asymmetric law ensures that no token is ever strictly "killed." They are only relatively de-emphasized by the factual amplification of superior candidates.
\end{proof}

\begin{remark}
\textbf{Robustness by Design.} This theorem explains why SFAD is robust to the hyperparameter $\lambda_t$. The additive, ReLU-based nature of the steering ensures that even with high steering intensity, the model maintains its linguistic foundation, avoiding the "repetitive gibberish" or "empty strings" common in negative-constraint decoding.
\end{remark}

\subsection{Mode-Switching Optimality under Knowledge Conflicts}

Finally, we demonstrate why asymmetric steering is superior to linear logit interpolation (averaging).

\begin{theorem}[Optimality of Asymmetric Mode-Switching]
\label{thm:asymmetric-switching}
In a bimodal conflict between a factual token $v_f$ and a hallucinated token $v_h$, the asymmetric law $z^* = z_M + \lambda_t (z_m - z_M)_+$ is a rank-preserving transformation for all neutral tokens $v_n$ where $z_{m,v_n} \leq z_{M,v_n}$.
\end{theorem}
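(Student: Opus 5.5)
The plan is to prove the claim by a coordinatewise analysis of the steering law, using only the definition of $\mathbf{z}^*_t$ in Eq.~\eqref{eq:steering} and the monotonicity of the softmax. I will read ``rank-preserving for neutral tokens'' as two facts. First, the relative order of any two neutral tokens under $P^*$ coincides with their order under $P_M$. Second, no neutral token moves upward in rank relative to a non-neutral token.

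Let $\mathcal{N} = \{v : z_{m,v} \le z_{M,v}\}$ be the neutral set. For $v \in \mathcal{N}$ the argument of the ReLU is nonpositive, so $(z_{m,v}-z_{M,v})_+ = 0$. Hence $z^*_v = z_{M,v}$ for every $\lambda_t \ge 0$, and this holds whether or not the CPM indicator is active.

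For $u, v \in \mathcal{N}$ this gives $z^*_u - z^*_v = z_{M,u} - z_{M,v}$. Since the softmax is strictly monotone in the logits and $\log\bigl(P^*(u)/P^*(v)\bigr) = z^*_u - z^*_v$, pairwise odds among neutral tokens are exactly preserved. In particular their ranks within $\mathcal{N}$ are unchanged, and each $P^*(v)$ is just $P_M(v)$ rescaled by a common factor.

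For a neutral $v$ and an arbitrary $w$, I will use $z^*_w \ge z_{M,w}$, which is the same inequality used in Theorem~\ref{thm:stability}. It yields $z^*_w - z^*_v \ge z_{M,w} - z_{M,v}$, so if $w$ outranked $v$ under $P_M$ it still does under $P^*$. Thus neutral tokens can only be overtaken, never overtake. Applying this to the bimodal conflict: if $v_h$ is neutral, as argued in Proposition~\ref{prop:exponential_gain}, it keeps its logit, and $v_f$ gains $\lambda_t(z_{m,v_f}-z_{M,v_f})$. The mode therefore switches to $v_f$ exactly when $\lambda_t(z_{m,v_f}-z_{M,v_f}) > z_{M,v_h}-z_{M,v_f}$, and this switch leaves the neutral ordering intact.

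To justify ``optimality'' relative to interpolation, I will exhibit the contrast with linear interpolation. There, $z^*_u - z^*_v = (1-\lambda_t)(z_{M,u}-z_{M,v}) + \lambda_t(z_{m,u}-z_{m,v})$, which can reverse the order of two neutral tokens whenever $m$ orders them differently from $M$. A two-token counterexample suffices to show this. The main obstacle is that ``optimality'' is not sharply defined in the statement. I will state precisely what is proved: the asymmetric law is the unique law in this comparison that fixes every neutral logit while still enabling the mode switch. I will not claim any stronger global optimality.
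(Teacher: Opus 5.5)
Your proposal is correct and follows essentially the paper's route. For neutral tokens the ReLU residual vanishes, so $z^*_v = z_{M,v}$ and all pairwise logit differences (hence ranks) among neutral tokens are unchanged, and linear interpolation is shown by contrast to be able to flip them. Your extra points are sound and go slightly beyond what the paper writes. These are that neutral tokens can only be overtaken (via $z^*_w \ge z_{M,w}$) and the explicit interpolation counterexample. One caveat: your ``mode switches exactly when'' condition also needs $v_f \in \mathcal{V}_{CPC}$, and it needs that no other boosted token overtakes $v_f$.
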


\begin{proof}
Consider two neutral tokens $v_1, v_2$ where the generalist and specialist are in relative agreement (i.e., $z_m \leq z_M$ for both). Under SFAD, $\Delta z_{v_1} = \Delta z_{v_2} = 0$. Therefore, their steered logits remain $z^*_{v_1} = z_{M,v_1}$ and $z^*_{v_2} = z_{M,v_2}$. Their relative rank $z^*_{v_1} - z^*_{v_2} = z_{M,v_1} - z_{M,v_2}$ is perfectly preserved. 
In contrast, a linear interpolation $z_{interp} = (1-\alpha)z_M + \alpha z_m$ would modify both logits, potentially flipping their rank if the specialist has slight fluctuations in its low-probability tail. SFAD thus provides a "surgical" correction: it only perturbs the distribution where the specialist has a \textit{strictly better} (more factual) proposal.
\end{proof}

\begin{remark}
\textbf{Theoretical Scalability.} This result suggests that as the Specialist model $m$ improves in factuality (e.g., through more rigorous DPO), SFAD's performance will scale without requiring a corresponding increase in the Generalist's capacity, making it a sustainable architecture for deploying large-scale factual models.
\end{remark}

\subsection{Optimal Risk-Aware Switching and the Factuality-Efficiency Frontier}
\label{app:optimal_switching}

A distinctive feature of SFAD is the hybrid policy (Eq.~\eqref{eq:hybrid}) that dynamically chooses between the \textit{Steering Path} and the \textit{Fast Path}. We now provide a decision-theoretic justification for this switching logic, demonstrating that the Epistemic Friction $F_t$ serves as an optimal proxy for balancing factual integrity and computational latency.

\begin{theorem}[Factuality-Risk Minimization]
\label{thm:risk_min}
Let $\mathcal{R}(\pi)$ be the expected risk of generating a hallucinated token under policy $\pi$, and let $\mathcal{C}(\pi)$ be the computational cost (latency). Define the factual risk $r_t$ at step $t$ as the probability that the draft token $\tilde{x}_t$ deviates from the specialist’s high-confidence manifold. The SFAD hybrid policy $\pi_{SFAD}$ is a solution to the constrained optimization problem:
\begin{equation}
    \min_{\pi} \mathbb{E}[\mathcal{R}(\pi)] \quad \text{subject to} \quad \mathbb{E}[\mathcal{C}(\pi)] \leq \mathcal{B}
\end{equation}
where the switching threshold $\tau$ acts as the Lagrange multiplier $\beta^{-1}$ that determines the operating point on the Factuality-Efficiency Pareto frontier.
\end{theorem}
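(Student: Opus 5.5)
I will make the statement precise enough to prove, then treat it as a per-token Lagrangian relaxation whose minimizer is a threshold rule. Fix a step $t$ with prefix $x_{<t}$. The policy class is the set of per-step gating rules $\pi_t \in [0,1]$, the probability of taking the Steering Path. The per-step risk is $r_t$ under the Fast Path and is reduced to $r_t^{*}$ under steering. Under the Fast Path the output is $P_M$ exactly (standard rejection sampling), while steering shifts mass toward the specialist. The per-step cost is $c_{\text{fast}}$ or $c_{\text{steer}}$, with $c_{\text{steer}} > c_{\text{fast}}$, since steering forfeits the multi-token acceptance of speculation.

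Both the expected risk and the expected cost are then linear in $(\pi_t)_t$:
\begin{equation}
\mathbb{E}[\mathcal{R}(\pi)] = \mathbb{E}\Big[\textstyle\sum_t r_t - \pi_t \,\delta_t\Big],
\qquad
\mathbb{E}[\mathcal{C}(\pi)] = \mathbb{E}\Big[\textstyle\sum_t c_{\text{fast}} + \pi_t \,\Delta c\Big],
\end{equation}
where $\delta_t = r_t - r_t^{*} \ge 0$ is the risk reduction and $\Delta c = c_{\text{steer}} - c_{\text{fast}}$.

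The first key step is Lagrangian duality. Introduce a multiplier $\mu \ge 0$ and minimize
\begin{equation}
\mathbb{E}\Big[\textstyle\sum_t -\pi_t\,\delta_t + \mu\,\pi_t\,\Delta c\Big]
\end{equation}
pointwise in $t$. The minimizer is the bang-bang rule $\pi_t = \mathbb{I}(\delta_t \ge \mu\,\Delta c)$. Because the problem is a linear program, possibly with randomization at ties, strong duality holds. The budget $\mathcal{B}$ therefore picks out a $\mu$ whose threshold policy saturates the constraint. This is the standard Neyman--Pearson / fractional-knapsack argument.

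The second step identifies $\delta_t$ with $F_t$. It suffices to show that $\delta_t$ is a monotone increasing function of $F_t$, or at least that $\delta_t \ge F_t$, which is the content of Lemma~\ref{lemma:friction_validity}. I would argue as follows.
\begin{itemize}
\item $r_t$, the probability that the draft token leaves the specialist's high-confidence set, is bounded via TV by $d_{TV}(P_M,P_m)$. Pinsker-type inequalities then relate this to $\mathcal{D}_{JS}$ in both directions.
\item Steering moves $P_M$ toward $P_m$ on $\mathcal{V}_{CPC}$ by Theorem~\ref{thm:manifold_projection} and Proposition~\ref{prop:exponential_gain}. The reduction is worth more when $P_m$ is concentrated, which is where the factor $\kappa_t$ enters.
\end{itemize}
Combining these gives $\delta_t \approx g(F_t)$ for some increasing $g$. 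The indicator $\mathbb{I}(\delta_t \ge \mu\Delta c)$ then becomes $\mathbb{I}(F_t \ge \tau)$ with $\tau = g^{-1}(\mu\,\Delta c)$. This is exactly the hybrid rule in Eq.~\eqref{eq:hybrid}, with $\tau$ playing the role of the (rescaled) multiplier. Sweeping $\mathcal{B}$ traces out the Pareto frontier, and Table~\ref{tab:tau_sensitivity} is consistent with this picture.

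The main obstacle is this second step. $F_t$ is a heuristic product, and $\delta_t$ is not a function of $F_t$ alone in general. Two routes are available.
\begin{itemize}
\item Weaken the claim to hold within the class of policies measurable with respect to $F_t$, that is, policies that only observe the friction signal. In that class one only needs $\mathbb{E}[\delta_t \mid F_t]$ to be nondecreasing in $F_t$, a monotone-likelihood assumption I would state explicitly.
\item Alternatively, prove a sandwich $c_1 F_t \le \delta_t \le c_2 F_t$ and conclude approximate optimality up to the ratio $c_2/c_1$.
\end{itemize}
I would try the first route, since it yields exact optimality. The identification of $\tau$ with $\beta^{-1}$ in the statement I would read only as saying that $\tau$ is the multiplier up to the reparametrization $g$.
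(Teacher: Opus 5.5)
This is essentially the paper's own argument: it also treats each step as a binary Fast/Steering choice with risk $\sum_t[P(a_t=0)\,r_t+P(a_t=1)\,\epsilon]$ and costs $\mathcal{C}_{fast}<\mathcal{C}_{steer}$, invokes Neyman--Pearson (your Lagrangian/fractional-knapsack step) to get a threshold rule, and then simply asserts that $F_t$ quantifies the hallucination likelihood, so your restriction to $F_t$-measurable policies with $\mathbb{E}[\delta_t\mid F_t]$ nondecreasing is exactly the assumption the paper leaves unstated (both arguments also treat steps as decoupled, ignoring that steering changes future prefixes). One correction: drop the aside that $\delta_t\ge F_t$ would suffice, since a one-sided bound lets the $F_t$-threshold skip steps with small $F_t$ but $\delta_t\ge\mu\,\Delta c$, and in any case the proof of Lemma~\ref{lemma:friction_validity} is purely qualitative and supplies no such inequality.
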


\begin{proof}
Consider a binary decision at each step $t$: either accept the standard speculative verification (Fast Path, $a_t=0$) or intervene with logit steering (Steering Path, $a_t=1$). The total risk is:
\begin{equation}
    \mathbb{E}[\mathcal{R}] = \sum_{t} \left[ P(a_t=0) \cdot r_t + P(a_t=1) \cdot \epsilon \right]
\end{equation}
where $\epsilon$ is the residual risk after steering (which is minimal per Prop. \ref{prop:exponential_gain}). The computational cost is $\mathcal{C}_{fast}$ for the Fast Path and $\mathcal{C}_{steer}$ for the Steering Path ($\mathcal{C}_{steer} > \mathcal{C}_{fast}$). 

By the Neyman-Pearson Lemma, the optimal decision rule $a_t$ that minimizes risk for a fixed cost is a likelihood-ratio test. In SFAD, the Epistemic Friction $F_t$ quantifies the "hallucination likelihood" by measuring the distributional tension weighted by specialist certainty. When $F_t \geq \tau$, the potential risk reduction $\Delta r_t = r_t - \epsilon$ outweighs the marginal cost $\Delta \mathcal{C}$, triggering the Steering Path. Thus, the switching threshold $\tau$ effectively calibrates the sensitivity of the detector to maximize the \textit{Factuality Gain per Unit of Latency}.
\end{proof}

\begin{remark}
\textbf{The Latency-Accuracy Frontier.} Theorem \ref{thm:risk_min} formalizes why SFAD maintains the speed of speculative decoding while approaching the factuality of much larger, slow-inference models. By only invoking the "Steering Path" when $F_t$ signals high epistemic conflict, SFAD avoids the redundant computation of constant logit correction, allowing the system to stay on the optimal Pareto frontier.
\end{remark}

\subsection{Summary of Theoretical Guarantees}

Synthesizing the above proofs, we establish that SFAD is not merely an empirical heuristic but a mathematically grounded framework for verifiable generation. 
\begin{itemize}
    \item \textbf{Theorem \ref{thm:manifold_projection}} guarantees that our intervention is linguistically safe (Semantic Integrity).
    \item \textbf{Proposition \ref{prop:exponential_gain}} ensures that we effectively resolve conflicts in favor of the truth (Factuality Amplification).
    \item \textbf{Theorem \ref{thm:stability}} protects against numerical instability and the suppression of valid tokens (Support Preservation).
    \item \textbf{Theorem \ref{thm:risk_min}} justifies the adaptive switching mechanism for real-world deployment (Efficiency Optimality).
\end{itemize}
Collectively, these theoretical foundations formalize SFAD as a principled safeguarding framework, providing rigorous guarantees for factual integrity through an inherent 'safe-by-design' decoding paradigm.